\documentclass[twoside]{article}

\usepackage[preprint]{neutral-preprint}
\usepackage[round]{natbib}
\renewcommand{\bibname}{References}
\renewcommand{\bibsection}{\subsubsection*{\bibname}}

\usepackage{url}
\usepackage{bbm}
\usepackage{graphicx}
\usepackage{algorithm}
\usepackage{algpseudocode}
\usepackage{amsmath}
\usepackage{amssymb}
\usepackage{booktabs}
\usepackage{tabularx}
\usepackage{multirow}
\usepackage{array}
\usepackage{graphicx}
\usepackage{subcaption}
\usepackage{xcolor}
\usepackage{tikz}
\usetikzlibrary{arrows.meta,calc,shapes.geometric}
\definecolor{posgreen}{RGB}{44,160,44}
\definecolor{negred}{RGB}{214,39,40}
\definecolor{actorange}{RGB}{255,127,14}
\tikzset{
  ov/.style={
    every node/.style={font=\footnotesize},
    ovst/.style={draw=gray!70, fill=gray!12, rounded corners=1.2pt,
                 minimum width=0.36cm, minimum height=0.36cm, inner sep=0pt},
    ovcur/.style={ovst, draw=black, fill=gray!35, line width=0.8pt},
    ovpos/.style={ovst, draw=posgreen, fill=posgreen!18, line width=0.9pt},
    ovneg/.style={ovst, draw=negred, fill=negred!15, line width=0.9pt},
    ovfaded/.style={ovst, draw=gray!40, fill=gray!5},
    ovroll/.style={ovst, draw=black!70, fill=white, dashed, line width=0.6pt},
    ovtr/.style={->, gray!70, line width=0.5pt},
    ovbox/.style={draw, rounded corners=2.5pt, line width=0.8pt, align=center},
    ovenc/.style={draw, rounded corners=1.5pt, line width=0.7pt,
                  minimum width=1.5cm, minimum height=0.36cm},
    ovnote/.style={font=\scriptsize, text=gray, align=center},
  }
}
\usepackage{wrapfig}
\usepackage{amsthm}
\usepackage{comment}
\newtheorem{theorem}{Theorem}[section]
\newtheorem{lemma}[theorem]{Lemma}
\usepackage{amsthm}
\newtheorem{proposition}{Proposition}

\theoremstyle{definition}
\newtheorem{definition}{Definition}
\newtheorem{examplex}{Example}
\newenvironment{example}{\pushQED{\qed}\begin{examplex}}{\popQED\end{examplex}}

\begin{document}

%

%

\twocolumn[

\aistatstitle{Counterfactual Rollouts for Offline Contrastive Goal-Conditioned Reinforcement Learning}

\begin{center}
Tingrui Huang$^{1}$ \qquad
Ai Bo$^{2}$ \qquad
Junzhe Zhang$^{2}$

\vspace{0.5em}

{\small
$^{1}$Eindhoven University of Technology,
Eindhoven, The Netherlands
}

{\small
$^{2}$Department of Electrical Engineering and Computer Science,\\
Syracuse University, Syracuse, NY, USA
}
\end{center}

]

\begin{abstract}
  Contrastive reinforcement learning (CRL) learns goal-conditioned values by associating state--action pairs with the states subsequently reached, assuming that no unobserved variable jointly affects the recorded action and its outcome. When offline demonstrations come from a demonstrator with information hidden from the learner, this assumption fails and the critic absorbs spurious action--outcome associations. We formalize this setting with confounded Markov decision processes and introduce counterfactual transitions that separate what the data identify from what they do not; minimizing over all compatible transitions yields a worst-case occupancy that lower-bounds the true one. Causal Contrastive Reinforcement Learning (CauCRL) learns a pessimistic counterfactual transition from positive and negative offline demonstrations and trains the critic on counterfactual rollouts, substantially improving success rates over standard CRL on confounded goal-reaching tasks.
\end{abstract}

\section{Introduction}

\begin{figure*}[t]
\centering
\begin{tikzpicture}[ov]
\foreach \i/\x in {1/0.75,2/1.35,3/1.95,4/2.55}{
  \node[ovpos] (p\i) at (\x,2.3) {};
  \node[ovpos] (n\i) at (\x,0.7) {};
}
\node[ovneg] at (n4) {};
\node[text=negred, font=\scriptsize\bfseries] at (n4) {$\times$};
\foreach \i/\j in {1/2,2/3,3/4}{ \draw[ovtr] (p\i) -- (p\j); }
\node[anchor=east] at (p1.west) {$\mathcal D^{+}$};
\node[anchor=east] at (n1.west) {$\mathcal D^{-}$};
\node[ovnote, anchor=south] at (1.65,2.5) {positive demonstrations};
\node[draw, dashed, circle, inner sep=0.5pt, minimum size=0.38cm] (U) at (1.65,1.5) {$U$};
\foreach \x in {1.05,1.65,2.25}{ \draw[->, dashed, line width=0.5pt] (U) -- (\x,2.28); }
\foreach \x in {1.05,1.65,2.25}{ \draw[->, dashed, line width=0.5pt] (U) -- (\x,0.72); }
\node[ovnote, anchor=north] at (1.65,0.5) {negative demonstrations};
\node[ovbox, minimum width=2.2cm] (T) at (5.0,1.5)
  {{\scriptsize pessimistic counterfactual}\\[-2pt]{\scriptsize transition}\\[-1pt]$\mathcal T_\eta(s,a,a',s')$};
\draw[->, line width=0.7pt, posgreen!80!black]
  (p4.east) .. controls (3.1,2.3) and (3.0,1.62) .. ($(T.west)+(0,0.12)$)
  node[pos=0.35, above right=-2pt] {$\mathcal L^{+}$};
\draw[->, line width=0.7pt, negred]
  (n4.east) .. controls (3.1,0.7) and (3.0,1.38) .. ($(T.west)+(0,-0.12)$)
  node[pos=0.35, below right=-2pt] {$\mathcal L^{-}$};
\node[ovcur] (s) at (7.35,1.5) {};
\node[anchor=north east, inner sep=1pt] at (s.south) {$s$};
\draw[->, line width=0.7pt] (T.east) -- (s.west);
\foreach \i/\x in {1/8.1,2/8.85,3/9.6}{ \node[ovroll] (r\i) at (\x,1.5) {}; }
\node[ovpos] (rp) at (10.35,1.5) {};
\draw[->, actorange, line width=1pt] (s) -- (r1) node[midway, above=-1pt] {$a$};
\draw[->, line width=0.5pt] (r1) -- (r2);
\draw[->, line width=0.5pt] (r2) -- (r3);
\draw[->, line width=0.5pt] (r3) -- (rp);
\node[ovnote, anchor=south] at (9.2,1.72) {counterfactual rollout};
\foreach \i/\x in {1/8.1,2/8.85,3/9.6}{ \node[ovfaded] (o\i) at (\x,0.7) {}; }
\node[ovfaded] (op) at (10.35,0.7) {};
\node[text=negred, font=\scriptsize\bfseries] at (op) {$\times$};
\draw[ovtr, gray!40] (s.south east) -- (o1);
\foreach \i/\j in {1/2,2/3}{ \draw[ovtr, gray!40] (o\i) -- (o\j); }
\draw[ovtr, gray!40] (o3) -- (op);
\node[anchor=west, text=gray, inner sep=1.5pt] at (op.east) {$s^{+}$};
\node[ovnote, anchor=north] at (9.2,0.5) {observed future (confounded)};
\node[ovenc] (phi) at (12.4,2.45) {};
\node[ovenc] (psi) at (12.4,1.5) {};
\foreach \k in {-2,...,2}{
  \draw ($(phi)+(\k*0.24,0)$) circle (0.07);
  \draw ($(psi)+(\k*0.24,0)$) circle (0.07);
}
\node[anchor=south, inner sep=1.5pt] at (phi.north) {$\varphi(s,a)$};
\node[anchor=south, inner sep=1.5pt] at (psi.north) {$\psi(\cdot)$};
\node (dot) at (13.75,1.975) {$\times$};
\draw[->, line width=0.6pt] (phi.east) -- (dot);
\draw[->, line width=0.6pt] (psi.east) -- (dot);
\node[anchor=west] at (dot.east) {$\mathcal L(s,a,\tilde s^{+},s^{-})$};
\draw[->, line width=0.7pt, posgreen!80!black] (rp.east) -- (psi.west)
  node[pos=0.7, above=-1pt] {$\tilde s^{+}$};
\draw[->, line width=0.7pt, posgreen!80!black] (11.0,1.5) |- (phi.west)
  node[pos=0.75, above=-1pt] {$(s,a)$};
\fill[posgreen!80!black] (11.0,1.5) circle (1.3pt);
\node[ovneg] (neg) at (12.4,0.55) {};
\node[anchor=west, text=negred, inner sep=2pt, font=\scriptsize] at (neg.east) {negative sample $s^{-}\sim P^{-}(s)$};
\draw[->, dashed, line width=0.6pt, negred] (neg.north) -- (psi.south);
\end{tikzpicture}
\caption{
    Overview of CauCRL. A pessimistic counterfactual transition
    $\mathcal T_\eta$ is learned from positive demonstrations
    $\mathcal D^{+}$ and negative demonstrations
    $\mathcal D^{-}$, confounded by $U$. Its rollout from $(s,a)$ yields a
    counterfactual future $\tilde s^{+}$ that replaces the observed future
    $s^{+}$ as the positive sample, contrasted against a negative sample
    $s^{-}\sim P^{-}(s)$.
}
\label{fig:overview}
\end{figure*}

Learning to make good decisions from sparse feedback is a long-standing challenge in reinforcement learning (RL) \citep{sutton2018reinforcement}. In many real-world applications, the agent receives little or no reward signal: it may only observe expert demonstrations, or be told whether a task has been accomplished. Among various paradigms of learning from sparse feedback, imitation learning \citep{pomerleau1989alvinn,ross2011reduction,ho2016generative}, which learns from demonstrations without any reward signal, and goal-conditioned RL \citep{kaelbling1993ijcai-learning,pmlr-v37-schaul15}, which learns to reach goals specified at test time, have emerged as crucial and popular directions. Goal-conditioned RL is particularly appealing, since a single learned policy can be reused across many tasks \citep{duan2016rl2}.

A variety of algorithms have been proposed for goal-conditioned RL, including reward shaping \citep{ng1999policy}, hindsight relabeling \citep{andrychowicz2017hindsight}, goal-conditioned supervised learning \citep{ghosh2021learning}, and universal value function approximation \citep{pmlr-v37-schaul15}, to name a few; see \citet{liu2022goal} for a survey. Among them, contrastive RL is a simple and effective framework that trains a critic to distinguish future states of the same trajectory from states of unrelated trajectories, and uses this critic as a goal-conditioned value function \citep{DBLP:journals/corr/abs-2011-08909,NEURIPS2022_e7663e97}. While successful in practice, these methods rely on the crucial assumption that the actions observed in the offline data, along with the subsequent states they produce, are not simultaneously influenced by unobserved confounders. If this assumption is violated, the probability of reaching the goal following a candidate policy becomes non-identifiable, i.e., it cannot be uniquely determined from the available data and model assumptions. The following example illustrates such challenges.

\begin{example}[Challenges of Unobserved Confounding]
\label{ex:windy}
Consider the windy-corridor task described in Fig.~\ref{fig:minigrid-result} (left), where the agent must reach the goal without touching the lava \citep{bo2026counterfactual}. A short route passes through a corridor next to the lava, while a longer route is safe. At each time step $t$, a wind $U^{(t)}$ blows through the corridor; when it is unfavorable, moving forward along the short route pushes the agent into the lava. The learner cannot observe the wind. Instead, it observes demonstrations from an expert who can: the expert moves forward along the short route when the wind is favorable and pauses otherwise, while a small fraction of demonstrations follow the longer route. The wind $U^{(t)}$ thus becomes an unobserved confounder affecting both the observed action and the next state. We apply standard contrastive RL \citep{NEURIPS2022_e7663e97} to these demonstrations. As shown in Fig.~\ref{fig:minigrid-result} (right), it fails to achieve a high success rate: since forward moves along the short route are almost always followed by reaching the goal in the demonstrations, the critic overvalues them, even though the learner cannot determine when they are safe.
\end{example}

\begin{figure}[t]
    \centering
    \includegraphics[width=0.49\columnwidth]{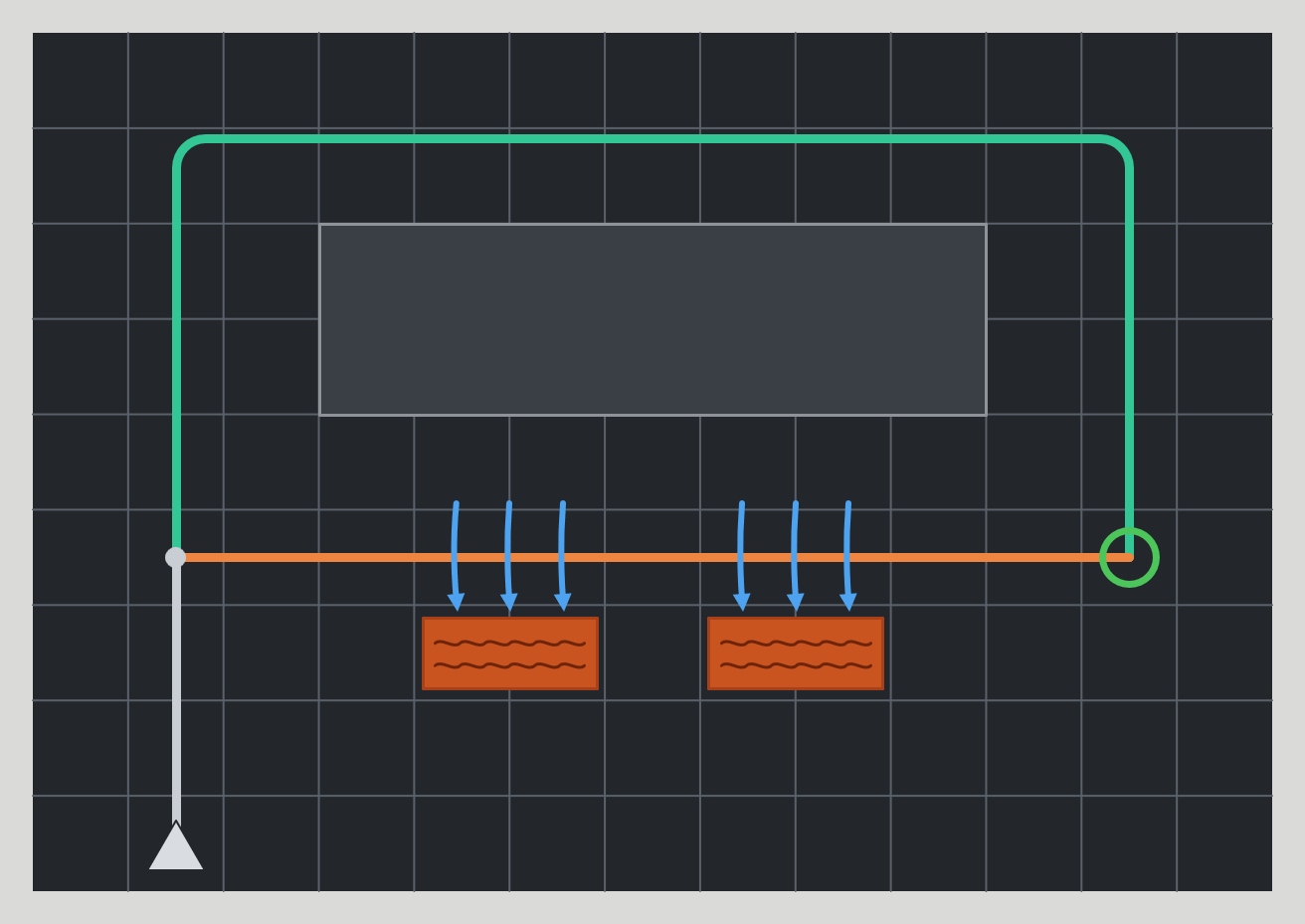}
    \hfill
    \includegraphics[width=0.49\columnwidth]{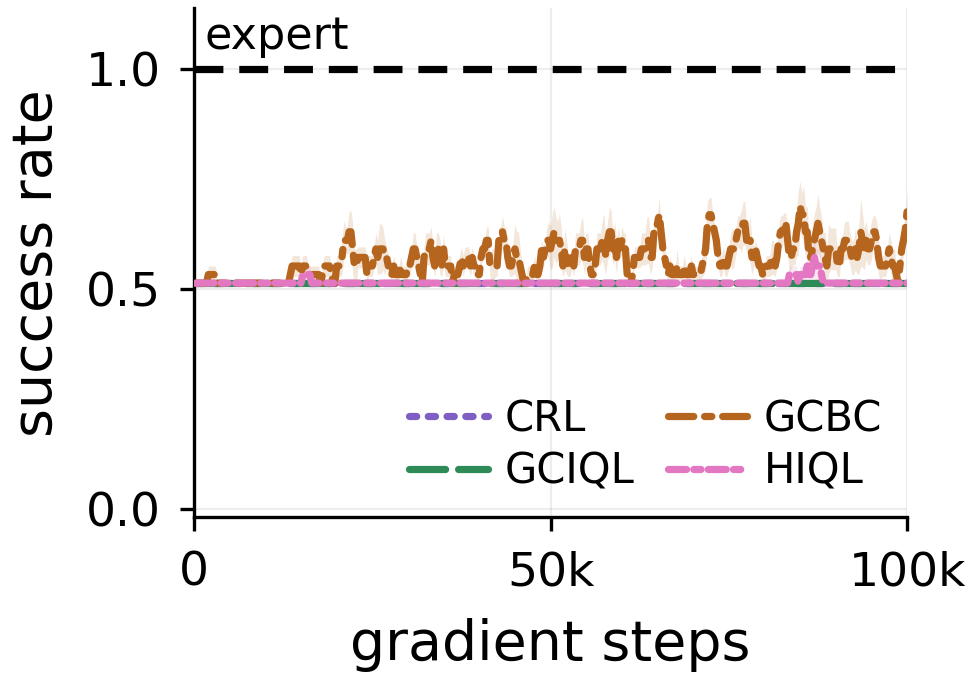}
\caption{
Windy corridor (left) and success rate over training (right):
under confounded demonstrations, all baselines stay below the expert success rate of $1.0$.
}
    \label{fig:minigrid-result}
\end{figure}

Recently, a growing body of literature has explored the nuanced interactions between causal inference \citep{pearl2009causality,bareinboim2022pearlhierarchy} and RL to address data biases in decision-making under uncertainty, known as \emph{causal reinforcement learning}. Several algorithms have been proposed for various policy learning settings, including off-policy evaluation and learning \citep{kallus2019confoundingrobustpolicyimprovement,NEURIPS2020_da21bae8,kallus2020confounding,pmlr-v130-bennett21a,pmlr-v286-zhang25d}, offline-to-online learning \citep{NEURIPS2019_8252831b}, robust offline and deep RL \citep{joshi2024aaai-safe,NEURIPS2025_c9a177f0,li2026cfql}, meta-RL \citep{bo2026counterfactual}, and imitation learning \citep{dehaan2019causal,zhang2020causal,ruan2024causal,balazadeh2025sequentialdecisionmakingexpert}, to name a few. Despite these progresses, a systematic approach for goal-conditioned RL under unobserved confounding is still missing. It is unclear how one can learn the goal-reaching occupancy underlying contrastive RL when the demonstrations that provide its positive samples are confounded.

This paper aims to address this gap by proposing Causal Contrastive Reinforcement Learning (CauCRL; Fig.~\ref{fig:overview}), which trains contrastive critics on pessimistic counterfactual rollouts that remain consistent with the confounded demonstrations. More specifically, our contributions are summarized as follows. (1) We formulate goal-conditioned RL from confounded demonstrations and introduce counterfactual transitions together with a worst-case occupancy measure that lower-bounds the true probability of reaching the goal (Sec.~\ref{sec:counterfactual-modeling}); (2) We develop a computational procedure for CauCRL that learns a pessimistic counterfactual transition from positive and negative demonstrations and uses its counterfactual rollouts as positive samples for contrastive critic learning (Sec.~\ref{sec:method}); (3) Finally, we validate CauCRL through simulations on goal-reaching tasks with confounded demonstrations, ranging from discrete navigation to high-dimensional continuous control (Sec.~\ref{sec:experiments}). Due to space constraints, all proofs are provided in Appendix~\ref{app:theory}. 

\textbf{Notations.}
We use capital letters to denote random variables ($X$), lowercase letters for their values ($x$), and calligraphic letters for their domains ($\mathcal{X}$).
For a set $\mathcal{X}$, $|\mathcal{X}|$ denotes its cardinality.
We write $P(X)$ for a probability distribution over $X$ and $P(Y\mid X)$ for the set of conditional distributions $P(Y\mid X=x)$ over all realizations $x$.
We abbreviate $P(X=x)$ as $P(x)$ and $P(Y=y\mid X=x)$ as $P(y\mid x)$.
Finally, $\mathbbm{1}\{Z=z\}$ is an indicator that equals $1$ if $Z=z$ and $0$ otherwise.

\section{Goal-Conditioned RL with Unobserved Confounding}
\label{sec:problem}

We consider goal-conditioned RL \citep{NEURIPS2022_e7663e97} from
offline trajectories of a demonstrator that may use information
unavailable to the learner, modeled by a confounded Markov decision
process \citep{zhang2016mdpconfounders,bo2026counterfactual}.

\begin{definition}
\label{def:cmdp}
A confounded goal-conditioned Markov decision process is a tuple
$\mathcal{M}=\langle\mathcal{S},\mathcal{A},\mathcal{Y},\mathcal{U},\mathcal{F},P\rangle$,
where (1) $\mathcal{S}$, $\mathcal{A}$, and $\mathcal{Y}=\{0,1\}$
are, respectively, the spaces of observed states, actions, and
rewards; (2) $\mathcal{U}$ is the space of unobserved latent
variables; (3) $\mathcal{F}$ is a set of structural mechanisms
$f_S:\mathcal{S}\times\mathcal{A}\times\mathcal{U}\rightarrow\mathcal{S}$
and $f_A:\mathcal{S}\times\mathcal{U}\rightarrow\mathcal{A}$,
where $f_S$ determines the state transition and $f_A$ determines the
behavioral policy of the demonstrator; and (4) $P$ is an exogenous
distribution over $\mathcal{U}$.
Goals are states, $s^*\in\mathcal{S}$, and the reward indicates
whether the goal is reached, $Y=\mathbbm{1}\{S=s^*\}$.
\end{definition}

The process starts from an initial state $S^{(0)}\sim P(S^{(0)})$. At
each time step $t$, a latent variable $U^{(t)}$ is drawn independently
from $P(U)$, an action $A^{(t)}$ is taken, and the environment moves to
$S^{(t+1)}\leftarrow f_S(S^{(t)},A^{(t)},U^{(t)})$ with reward $Y^{(t)}=\mathbbm{1}\{S^{(t)}=s^*\}$. Since the structural functions and exogenous distributions are invariant with regard to time steps, we write $(S,A,S')$ for $(S^{(t)},A^{(t)},S^{(t+1)})$ when the time
index is clear.

The learner acts through a policy $\pi(a\mid s)$. Following a policy
is an intervention $do(\pi)$ that replaces the behavior mechanism $f_A$
by $\pi$ while leaving the remaining mechanisms unchanged
\citep{pearl2009causality}; similarly, $do(A^{(t)}=a)$ sets a single
action to $a$. A subscript on a random variable denotes the
corresponding counterfactual: $S'_a$ is the next state had the action
been set to $a$. The resulting interventional transition is
$P_a(s'\mid s)=P(S'_a=s'\mid S=s)=\int_{\mathcal{U}}\mathbbm{1}\{f_S(s,a,u)=s'\}\,dP(u)$,
and a policy $\pi$ induces the interventional distribution over
trajectories $(\bar s,\bar a)=(s^{(0)},a^{(0)},s^{(1)},a^{(1)},\dots)$,
\[
\begin{aligned}
P_\pi(\bar s,\bar a)
&=
P(s^{(0)})
\prod_{t\geq0}
\pi(a^{(t)}\mid s^{(t)})
P_{a^{(t)}}(s^{(t+1)}\mid s^{(t)}).
\end{aligned}
\]
For a state--action pair $(s,a)$ and goal $s^*$, we define the
discounted goal-reaching occupancy as
\begin{equation}
\label{eq:goal-occupancy}
\begin{aligned}
d^\pi(s^*\mid s,a)
&=
(1-\gamma)
\sum_{k=0}^{\infty}
\gamma^k
\\[-2pt]
&\quad\times
P_{a,\pi}\!\left(
S^{(t+k)}=s^*
\mid
S^{(t)}=s
\right),
\end{aligned}
\end{equation}
where $P_{a,\pi}$ sets $A^{(t)}$ to $a$ and lets subsequent actions
follow $\pi$. Since $Y^{(t+k)}=\mathbbm{1}\{S^{(t+k)}=s^*\}$, this
occupancy coincides with the Q-function of $\pi$,
$d^\pi(s^*\mid s,a)=Q^\pi(s,a)
:=(1-\gamma)\,\mathbb{E}_{a,\pi}[\sum_{k=0}^{\infty}\gamma^kY^{(t+k)}\mid S^{(t)}=s]$
\citep{NEURIPS2022_e7663e97}. The learner seeks a policy
$\pi$ that maximizes this goal-reaching probability.

Contrastive RL \citep{NEURIPS2022_e7663e97} learns this occupancy with
a critic $f_\omega(s,a,s^*)=\varphi_\omega(s,a)^\top\psi_\omega(s^*)$ by
contrasting positive and negative samples. For a state--action pair
$(s,a)$, a positive sample $s^+$ is a state from the discounted future
of a trajectory under $\pi$, while a negative sample
$s^-\sim P^-(s)$ is drawn from a marginal state distribution that
differs from the state distribution of the current policy estimate
$\pi$. The critic is trained with the binary NCE objective
\begin{align}
\max_{\omega}\quad
\mathbb{E}_{\substack{(s,a,s^+)\sim P_\pi\\ s^-\sim P^-}}\Big[
&
\log \sigma\!\left(f_\omega(s,a,s^+)\right)
\label{eq:crl-nce}
\\[-8pt]
&+
\log\!\left(
1-\sigma\!\left(f_\omega(s,a,s^-)\right)
\right)
\Big].
\nonumber
\end{align}
At the optimum, the critic recovers the unnormalized occupancy of the
positive samples \citep{NEURIPS2022_e7663e97}:
$\exp f^*(s,a,s^*)=d^\pi(s^*\mid s,a)/P^-(s^*)$, i.e., $Q^\pi$ up to an
action-independent factor, so the policy can be improved by favoring
actions with high critic values.

In the offline setting, on the other hand, the learner cannot intervene and must instead
learn from recorded trajectories of a demonstrator, who observes the
latent variable and selects actions by the behavioral policy
$A^{(t)}\leftarrow f_A(S^{(t)},U^{(t)})$. Demonstrations therefore follow the observational distribution
\[
\begin{aligned}
P(\bar s,\bar a)
&=
P(s^{(0)})
\prod_{t\geq0}
\int_{\mathcal{U}}
\mathbbm{1}\{f_A(s^{(t)},u)=a^{(t)}\}
\\
&\quad\times
\mathbbm{1}\{f_S(s^{(t)},a^{(t)},u)=s^{(t+1)}\}\,dP(u),
\end{aligned}
\]
in which the same latent realization $u$ drives both the action and
the transition. From these data, the learner can estimate the
observational transition
$P(s'\mid s,a)=P(S'=s'\mid S=s,A=a)$, but not the latent variable. The
offline data consist of two parts. Positive demonstrations follow an
observational distribution $P^+(\bar s,\bar a)$ of this form, induced
by the expert behavioral policy $f_A$, and contain trajectories reaching
the goal. Negative demonstrations are generated by a non-expert
behavioral policy, such as a randomized one, but record the
eventual states it reaches, such as the lava in
Example~\ref{ex:windy}. We denote their state distribution by
$P^-(s)$, which serves as the negative distribution in
Eq.~\eqref{eq:crl-nce}.\footnote{Unlike the original CRL
\citep{NEURIPS2022_e7663e97}, negatives here are drawn from the
non-expert demonstrations $P^-$, which we expect to sufficiently cover
undesirable states for the task; Sec.~\ref{sec:method} uses them as an
inductive bias toward negative outcomes for counterfactual
transitions.}

Offline CRL trains Eq.~\eqref{eq:crl-nce} on these data, taking the
positive sample $(s,a,s^+)\sim P^+$ from the discounted future of the
same demonstration instead of from $P_\pi$. This change of data is
where the guarantee breaks: because $U$ affects both the demonstrator's
action and the next state, generally $P(s'\mid s,a)\neq P_a(s'\mid s)$,
so positive samples follow the observational counterpart
$d(s^*\mid s,a)$ of Eq.~\eqref{eq:goal-occupancy}, with $P_{a,\pi}$
replaced by $P(\cdot\mid S^{(t)}=s,A^{(t)}=a)$. The critic then
recovers $d(s^*\mid s,a)/P^-(s^*)$ rather than $Q^\pi$, biasing both
the critic and the policy improved from it. The following example
illustrates this challenge.

\begin{example}[Example~\ref{ex:windy} continued]
\label{ex:windy-cont}
In the windy corridor, $U^{(t)}$ is the wind, which the demonstrator
observes but the learner does not. The expert policy $f_A$ moves
forward along the short route only when the wind is favorable, so
recorded forward moves are almost always followed by reaching the goal,
and a contrastive critic trained on these continuations values them
highly. A learner that cannot observe the wind, however, would move
forward regardless of it: conditioning on a forward action selects
favorable wind, so the observational transition
$P(s'\mid s,\mathrm{forward})$ never leads into the lava, whereas the
interventional transition $P_{\mathrm{forward}}(s'\mid s)$ also covers
unfavorable wind.
\end{example}

\subsection{Counterfactual Modeling}
\label{sec:counterfactual-modeling}

Evaluating a policy requires the interventional transition
$P_a(s'\mid s)$, whereas the offline data provide only the
observational transition $P(s'\mid s,a)$. Under unobserved confounding,
the former is generally not identifiable from the latter
\citep{pearl2009causality,bareinboim2022pearlhierarchy}: different
confounded MDPs can induce the same observational distribution but
different interventional transitions. Consequently, the occupancy
$d^\pi(s^*\mid s,a)$ in Eq.~\eqref{eq:goal-occupancy} is generally not
identifiable either \citep{kallus2020confounding,pmlr-v286-zhang25d}.
Rather than treating $P_a(s'\mid s)$ as a single unknown, we use counterfactual
reasoning to separate its identified and unidentified components. For an action $a\in\mathcal A$, the counterfactual (potential outcome)
variable $S^{(t+1)}_{A^{(t)}=a}=f_S(S^{(t)},a,U^{(t)})$ denotes the next
state that would have been observed had the action at time $t$ been set
to $a$, with the state and the unobserved confounder held at their
realized values \citep{pearl2009causality}. Unlike interventional
quantities, counterfactual variables can be evaluated jointly with the
factual ones under the same realization of $U^{(t)}$, relating the
outcome of action $a$ to the action $a'$ actually selected by the
demonstrator. Conditioning on the factual state and
action yields the counterfactual transition probability
\begin{equation}
\label{eq:ctf-transition}
\mathcal T(s,a,a',s')
=
P(S^{(t+1)}_{A^{(t)}=a}{=}s'\mid S^{(t)}{=}s,A^{(t)}{=}a').
\end{equation}
It answers the counterfactual query: ``What would the next state have
been had the action at state $s$ been set to $a$, given that the
demonstrator in fact selected $a'$?'' This construction follows the
same counterfactual reasoning as the effect of treatment on the treated
(ETT) \citep{shpitser2009uai-effects}, which we extend here to state
transitions in MDPs. Conditioning on $A^{(t)}=a'$ retains information
about the confounder under which the demonstrator acted.

The counterfactual transition connects the observational and
interventional transitions. When $a=a'$, the consistency axiom
\citep{galles1998axiomatic,pearl2009causality} gives
$\mathcal T(s,a,a,s')=P(s'\mid s,a)$, which is identified from the
offline data. Averaging over the
demonstrator's action instead recovers the interventional transition by
the law of total probability,
\begin{equation}
\label{eq:ett-decomposition}
\sum_{a'\in\mathcal A}
\mathcal T(s,a,a',s')\,P(a'\mid s)
=
P_a(s'\mid s),
\end{equation}
where the sum becomes an integral for continuous actions. A counterfactual transition is therefore compatible with the offline
data as long as it coincides with the observational transition along
the direction $a=a'$; the remaining entries, with $a\neq a'$, describe
the outcomes of actions the demonstrator did not take and are free
parameters. Thus, the compatible set is
$\mathbb T=\{\mathcal T:\mathcal T(s,a,a,\cdot)=P(\cdot\mid s,a)$ for
all $s,a\}$, where each $\mathcal T(s,a,a',\cdot)$ is a distribution
over $\mathcal S$.

Building on the counterfactual transition, we define the counterfactual
occupancy measure
\begin{equation}
\label{eq:ctf-occupancy}
\begin{aligned}
Q^\pi(s,a,a')
&=
(1-\gamma)
\sum_{k=0}^{\infty}
\gamma^k\,
P\bigl(S^{(t+k)}_{A^{(t)}=a,\pi}=s^*
\\
&\qquad\qquad
\mid S^{(t)}=s,A^{(t)}=a'\bigr),
\end{aligned}
\end{equation}
where $S^{(t+k)}_{A^{(t)}=a,\pi}$ is the state that would have been
reached at time $t+k$ had $A^{(t)}$ been set to $a$ and subsequent
actions followed $\pi$, given that the demonstrator in fact selected
$a'$. Averaging over the demonstrator's
action recovers the Q-function,
$Q^\pi(s,a)=\sum_{a'}Q^\pi(s,a,a')\,P(a'\mid s)$, as in
Eq.~\eqref{eq:ett-decomposition}. Since the unobserved confounder is
drawn independently at each step, the counterfactual occupancy
satisfies the counterfactual Bellman equation
\begin{equation}
\label{eq:ctf-bellman}
\begin{aligned}
Q^\pi(s,a,a')
&=
(1-\gamma)\mathbbm{1}\{s=s^*\}
\\
&\quad+
\gamma\sum_{s'}\mathcal T(s,a,a',s')
\sum_{\tilde a,\tilde a'}
\pi(\tilde a\mid s')
\\
&\qquad\times
P(\tilde a'\mid s')\,
Q^\pi(s',\tilde a,\tilde a').
\end{aligned}
\end{equation}
That is, the next state follows the counterfactual transition, after
which the policy selects $\tilde a$ while the demonstrator's action
$\tilde a'$ is again drawn from $P(\cdot\mid s')$.

The counterfactual Bellman equation depends on the unobserved
confounder only through the counterfactual transition. Replacing the
true counterfactual transition with a candidate $\mathcal T$ yields a
candidate MDP $\mathcal M_{\mathcal T}$, and we use the subscript
$\mathcal T$ for quantities computed in $\mathcal M_{\mathcal T}$. In
particular, solving Eq.~\eqref{eq:ctf-bellman} with any compatible
$\mathcal T\in\mathbb T$ gives the counterfactual occupancy measure
$Q^\pi_{\mathcal T}(s,a,a')$, whose average
$Q^\pi_{\mathcal T}(s,a)=\sum_{a'}Q^\pi_{\mathcal T}(s,a,a')\,P(a'\mid s)$
is the Q-function of $\pi$ in $\mathcal M_{\mathcal T}$
(Lemma~\ref{lem:bellman}). By consistency, $\mathbb T$ contains the true counterfactual transition
of $\mathcal M$. Since the offline data cannot
distinguish it from the other compatible ones, minimizing over
$\mathbb T$ gives a worst-case occupancy measure, which lower-bounds
the true one.

\begin{proposition}
\label{prop:pessimistic-occupancy}
For every counterfactual transition $\mathcal T\in\mathbb T$,
the counterfactual Bellman equation~\eqref{eq:ctf-bellman} has a unique
solution $Q^\pi_{\mathcal T}(s,a,a')$, and the worst-case occupancy
measure for a policy $\pi(a\mid s)$,
\[
\underline Q^\pi(s,a)
=
\inf_{\mathcal T\in\mathbb T}
\sum_{a'\in\mathcal A}
Q^\pi_{\mathcal T}(s,a,a')\,P(a'\mid s)
\]
satisfies $\underline Q^\pi(s,a)\leq Q^\pi(s,a)$ for every $(s,a)$.
\end{proposition}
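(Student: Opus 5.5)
The plan has two parts: existence and uniqueness of the solution via a contraction argument, and the lower bound via membership of the true counterfactual transition in $\mathbb T$.

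\textbf{Existence and uniqueness.} Fix $\mathcal T\in\mathbb T$. Define the operator $\mathcal B_{\mathcal T}$ on bounded functions $Q:\mathcal S\times\mathcal A\times\mathcal A\to\mathbb R$ by the right-hand side of Eq.~\eqref{eq:ctf-bellman}. For any two bounded $Q_1,Q_2$, the indicator term cancels, and the remaining term is an average of $Q_1-Q_2$ under the kernel
\[
\mathcal T(s,a,a',s')\,\pi(\tilde a\mid s')\,P(\tilde a'\mid s').
\]
Since each $\mathcal T(s,a,a',\cdot)$, each $\pi(\cdot\mid s')$ and each $P(\cdot\mid s')$ is a probability distribution, this kernel is a probability measure. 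Hence
\[
\|\mathcal B_{\mathcal T}Q_1-\mathcal B_{\mathcal T}Q_2\|_\infty\le\gamma\|Q_1-Q_2\|_\infty .
\]
With $\gamma<1$, Banach's fixed-point theorem on the complete space of bounded functions gives a unique bounded fixed point $Q^\pi_{\mathcal T}$. It takes values in $[0,1]$, because $\mathcal B_{\mathcal T}$ maps $[0,1]$-valued functions into $[0,1]$-valued functions. The solution is unique among bounded functions, which is the natural class since occupancies are bounded. For continuous spaces the sums become integrals and measurability is routine.

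\textbf{Lower bound.} The key step is to show that the true counterfactual transition $\mathcal T^\star(s,a,a',s')=P(S'_{a}=s'\mid S=s,A=a')$ of $\mathcal M$ lies in $\mathbb T$. Each $\mathcal T^\star(s,a,a',\cdot)$ is a distribution. Consistency gives $S'_{a}=S'$ on the event $A=a$, so $\mathcal T^\star(s,a,a,\cdot)=P(\cdot\mid s,a)$.

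Next I need that the true counterfactual occupancy of Eq.~\eqref{eq:ctf-occupancy} solves Eq.~\eqref{eq:ctf-bellman} with $\mathcal T=\mathcal T^\star$. I would take this as stated in the text preceding the proposition, which relies on $U^{(t)}$ being drawn i.i.d.\ across steps. If a justification is needed, it goes as follows:
\begin{enumerate}
\item Condition on $S^{(t+1)}_{a}=s'$.
\item Because $U^{(t+1)}$ is independent of $(S^{(t)},A^{(t)},U^{(t)})$, the future from $s'$ under $\pi$ has the same law as a fresh start at $s'$.
\item Introduce an auxiliary demonstrator action $\tilde a'\sim P(\cdot\mid s')$ and use the averaging identity $Q^\pi(s',\tilde a)=\sum_{\tilde a'}Q^\pi(s',\tilde a,\tilde a')P(\tilde a'\mid s')$, which is Eq.~\eqref{eq:ett-decomposition} propagated to occupancies.
\end{enumerate}
By uniqueness from the first part, $Q^\pi_{\mathcal T^\star}$ equals the true $Q^\pi(s,a,a')$.

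Averaging over $a'$ with $P(a'\mid s)$ then gives $\sum_{a'}Q^\pi_{\mathcal T^\star}(s,a,a')P(a'\mid s)=Q^\pi(s,a)$. Since the infimum over $\mathbb T$ is at most the value at $\mathcal T^\star\in\mathbb T$, we get $\underline Q^\pi(s,a)\le Q^\pi(s,a)$ for every $(s,a)$.

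\textbf{Main obstacle.} I expect the only delicate step to be the identification $Q^\pi_{\mathcal T^\star}=Q^\pi$, namely checking the Markov/independence argument that decouples the counterfactual at time $t$ from the future confounders. If Lemma~\ref{lem:bellman} is available, I would cite it directly: it states that the averaged solution is the Q-function of $\pi$ in $\mathcal M_{\mathcal T}$, and $\mathcal M_{\mathcal T^\star}$ reproduces the interventional transition $P_a(s'\mid s)$ of $\mathcal M$ by Eq.~\eqref{eq:ett-decomposition}. This route reduces everything to the equality of Q-functions in two MDPs that share the same interventional kernel.
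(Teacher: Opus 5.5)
Your proposal is correct and follows the paper's proof essentially step for step. The steps are the $\gamma$-contraction/Banach argument (the paper's Lemma~\ref{lem:bellman}(a)), consistency to place the true transition $\mathcal T^\star$ in $\mathbb T$, conditioning on the counterfactual next state (with future confounders independent of $(S^{(t)},U^{(t)})$) plus the law of total probability at $s'$ to make the true counterfactual occupancy the fixed point for $\mathcal T^\star$, then averaging over $a'$ and bounding the infimum by its value at $\mathcal T^\star$. Your fallback via Lemma~\ref{lem:bellman}(b)--(c) is also valid and more economical. The inequality only needs $\sum_{a'}Q^\pi_{\mathcal T^\star}(s,a,a')P(a'\mid s)=d^\pi(s^*\mid s,a)$, which follows from $P^{\mathcal T^\star}_a=P_a$ alone, whereas the paper's route additionally establishes the entrywise identity $Q^\pi_{\mathcal T^\star}(s,a,a')=Q^\pi(s,a,a')$.
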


The proof is given in
Appendix~\ref{app:theory}. Since $\mathbb T$ and
$P(a'\mid s)$ are determined by the observational data,
$\underline Q^\pi$ lower-bounds the probability of reaching the goal
without access to the unobserved confounder.

\section{Causal Contrastive Reinforcement Learning}
\label{sec:method}

We now present Causal Contrastive Reinforcement Learning (CauCRL), which approximates the pessimistic construction of Sec.~\ref{sec:counterfactual-modeling} with learned models. CauCRL retains the observed dynamics for the factual component and models the unidentified component pessimistically. It learns a pessimistic counterfactual transition model (Sec.~\ref{sec:counterfactual-training}), generates counterfactual rollouts under learner-selected actions whose states serve as positive samples for the contrastive critic, and improves the policy with this critic (Sec.~\ref{sec:counterfactual-rollouts}). Algorithm~\ref{alg:caucrl} summarizes the procedure.

\subsection{Learning Counterfactual Transitions}
\label{sec:counterfactual-training}

Computing the worst-case occupancy measure in
Proposition~\ref{prop:pessimistic-occupancy} requires characterizing the
worst-case counterfactual transitions within $\mathbb T$. In discrete
domains, the entries with $a\neq a'$ are unrestricted, and the worst
case is given by the Manski bounds \citep{manski1990nonparametric}: the
interventional transition follows the observational one with
probability $P(a\mid s)$ and is otherwise unconstrained. Such bounds
have been applied to off-policy evaluation \citep{pmlr-v286-zhang25d}
and deep Q-learning \citep{NEURIPS2025_c9a177f0}. In continuous
domains, however, the demonstrator selects any particular action with
probability zero, so the constraint
$\mathcal T(s,a,a,\cdot)=P(\cdot\mid s,a)$ alone does not restrict
nearby actions, and consistent bound estimation requires additional
smoothness constraints \citep{tan2024consistency}.

We therefore assume that the counterfactual transition is Lipschitz
continuous in all of its conditioning arguments: for some constant
$L<\infty$,
\[
\begin{aligned}
&W\big(\mathcal T(s_1,a_1,a'_1,\cdot),\,\mathcal T(s_2,a_2,a'_2,\cdot)\big)
\\
&\;\leq
L\big(d_{\mathcal S}(s_1,s_2)+d_{\mathcal A}(a_1,a_2)
+d_{\mathcal A}(a'_1,a'_2)\big)
\end{aligned}
\]
for all $s_1,s_2\in\mathcal S$ and $a_1,a_2,a'_1,a'_2\in\mathcal A$, where
$W$ denotes the Wasserstein-1 distance between distributions over
$\mathcal S$. Along the action arguments, this assumption mirrors the
smoothness conditions of \citet{kim2018identification}. Continuity in the evaluated action
$a$ corresponds to smooth treatment response (STR): the next-state
distribution changes gradually with the intervened action. Continuity
in the demonstrator's action $a'$ corresponds to smooth treatment
selection (STS): situations in which the demonstrator selects nearby
actions lead to similar counterfactual outcomes. Continuity in the
current state $s$ encodes the locality of the transition dynamics typical of
RL. For example, in grid-world navigation, agents moving in neighboring cells are likely to lead to a collision.

As illustrated in Fig.~\ref{fig:lipschitz-transport}, consistency
identifies $\mathcal T$ along the diagonal $a=a'$. An unidentified
entry $(s,a,a')$ with $a\neq a'$ reaches the diagonal by moving $a$ and
$a'$ to a common value $x$ between them, a path of total length
$d_{\mathcal A}(a,a')$. Lipschitz continuity limits how far
probability mass can be transported along this path,
\[
W\big(\mathcal T(s,a,a',\cdot),\,P(\cdot\mid s,x)\big)
\leq
L\,d_{\mathcal A}(a,a').
\]
The counterfactual transition thus lies in a Wasserstein ball around
the identified transitions: the outcomes of actions close to the
demonstrator's remain close to the observed ones, while the ball widens
as the evaluated action moves away. The worst-case occupancy measure is
obtained by restricting the minimization in
Proposition~\ref{prop:pessimistic-occupancy} to Lipschitz-continuous
elements of $\mathbb T$.

\begin{figure}[t]
\centering
\begin{tikzpicture}[
    x={(1.0cm,0cm)}, y={(0.6cm,0.8cm)}, z={(0cm,0.85cm)},
    every node/.style={font=\small},
    dens/.style={draw=black, fill=gray!25, line width=0.5pt},
    cand/.style={draw=black, dashed, line width=0.6pt},
]
\fill[gray!6] (0,0,0) -- (3.2,0,0) -- (3.2,3.2,0) -- (0,3.2,0) -- cycle;
\draw[gray!45] (0,0,0) -- (3.2,0,0) -- (3.2,3.2,0) -- (0,3.2,0) -- cycle;
\draw[->] (0,0,0) -- (3.55,0,0) node[right] {$a$};
\draw[->] (0,0,0) -- (0,3.55,0) node[anchor=south east, inner sep=1pt] {$a'$};
\draw[->] (0,0,0) -- (0,0,2.4) node[above] {$s'$};
\draw[line width=1.6pt] (0,0,0) -- (3.2,3.2,0);
\node[anchor=west, inner sep=2pt] at (3.2,3.2,0) {$a=a'$};
\draw[line width=0.9pt] (2.6,0.6,0) -- (1.6,0.6,0);
\draw[->, line width=0.9pt] (1.6,0.6,0) -- (1.6,1.53,0);
\node[below=1pt] at (2.1,0.6,0) {$L\,d_{\mathcal A}(a,a')$};
\foreach \t/\m in {2.6/1.4, 1.6/1.15, 0.6/0.95}{
  \filldraw[dens] (\t,\t,0)
    -- plot[domain=0:2.1, samples=60, smooth]
       ({\t+0.45*exp(-(\x-\m)*(\x-\m)/0.08)},{\t},{\x})
    -- (\t,\t,2.1) -- cycle;
  \fill (\t,\t,0) circle (1.6pt);
}
\node[anchor=south] at (1.75,1.6,2.12) {$P(\cdot\mid s,x)$};
\foreach \m in {0.65, 0.95, 1.25}{
  \draw[cand] plot[domain=0:2.1, samples=60, smooth]
    ({2.6+0.45*exp(-(\x-\m)*(\x-\m)/0.08)},{0.6},{\x});
}
\draw[gray!60] (2.6,0.6,0) -- (2.6,0.6,2.1);
\fill (2.6,0.6,0) circle (1.6pt);
\node[anchor=west] at (3.1,0.6,1.0) {$\mathcal T(s,a,a',\cdot)$};
\end{tikzpicture}
\caption{
Lipschitz continuity of the counterfactual transition. Along the
diagonal $a=a'$ (thick line), $\mathcal T(s,a,a,\cdot)=P(\cdot\mid s,a)$
is identified (filled densities over $s'$). An off-diagonal entry
$\mathcal T(s,a,a',\cdot)$ is not identified (dashed densities). The
shortest transport to the diagonal is L-shaped (arrow): it moves the
evaluated action from $a$ and the demonstrator's action from $a'$ to a
common value $x$, with total length $d_{\mathcal A}(a,a')$.
}
\label{fig:lipschitz-transport}
\end{figure}

We model a pessimistic counterfactual transition with a conditional
generative model $\mathcal T_\eta$: a neural network $g_\eta$ maps the state $s$,
the evaluated action $a$, the demonstrator's action $a'$, and noise
$\varepsilon\sim\mathcal N(0,I)$ to a next state
$\tilde s'=g_\eta(s,a,a',\varepsilon)$. Spectral normalization of its
weights \citep{miyato2018spectral} bounds the Lipschitz constant of
$g_\eta$ in $(s,a,a')$ for every $\varepsilon$; with shared noise,
this bound carries over to the Wasserstein distance between next-state
distributions.

The model is trained with two objectives. The observational objective
fits the entries with $a=a'$ to the positive demonstrations
$\mathcal D^{+}$ drawn from $P^+$,
\[
\mathcal L^{+}(\eta)
=
\mathbb E_{(s,a,s')\sim\mathcal D^{+}}
\left[
\ell\!\left(
\mathcal T_\eta(s,a,a,\cdot),
s'
\right)
\right],
\]
where $\ell$ is a proper scoring rule, such as the negative
log-likelihood or the energy score. Minimizing $\mathcal L^{+}$
enforces the consistency constraint that defines $\mathbb T$; the
remaining entries are chosen pessimistically. Since the negative
demonstrations record only states, the interventional objective pairs
each state--action pair $(s,a')$ from $\mathcal D^{+}$ with an
alternative action $a\neq a'$ drawn from a proposal $\rho(\cdot\mid s)$,
such as a uniform distribution, and with an independent negative state
$s^-$ from $\mathcal D^{-}$:
\[
\mathcal L^{-}(\eta)
=
\mathbb E_{(s,a')\sim\mathcal D^{+},\,a\sim\rho,\,s^-\sim\mathcal D^{-}}
\left[
\ell\!\left(
\mathcal T_\eta(s,a,a',\cdot),
s^-
\right)
\right].
\]
Minimizing $\mathcal L^{-}$ steers the entries with $a\neq a'$ toward
states reached by non-expert behavior; without further constraints, its
minimizer is $\mathcal T(s,a,a',\cdot)=P^-$. When $P^-$ concentrates on
absorbing failure states such as the lava, this coincides with the
worst case of the Manski bounds, which places all unidentified mass on
states from which the goal cannot be reached, and thus approximates the
minimizer in Proposition~\ref{prop:pessimistic-occupancy}. The
combined objective is given by
\begin{equation}
\mathcal L_{\mathcal T}(\eta)
=
\mathcal L^{+}(\eta)
+
\mathcal L^{-}(\eta)
\label{eq:ett-loss}
\end{equation}
is minimized over spectrally normalized networks: $\mathcal L^{+}$
anchors the model to the identified transitions, $\mathcal L^{-}$
selects pessimistic outcomes among the unidentified ones, and the
Lipschitz constraint keeps $\mathcal T_\eta$ within the Wasserstein
ball of radius $L\,d_{\mathcal A}(a,a')$. This prevents arbitrarily
adverse predictions and lets the model generalize to action pairs not
covered by the data. Implementation details are given in
Appendix~\ref{app:implementation}.

\subsection{Counterfactual Rollouts}
\label{sec:counterfactual-rollouts}
\label{sec:rollout-construction}

Evaluating a policy $\pi$ requires next states drawn from the
interventional transition $P_a(s'\mid s)$, which by
Eq.~\eqref{eq:ett-decomposition} averages the counterfactual transition
over the demonstrator's action. Since $P(A\mid S)$ is identified from
the demonstrations, we estimate it from $\mathcal D^{+}$ with a
statistical model $\mu_\phi(a'\mid s)$, the nominal behavioral policy. Any
conditional density estimator can be used; our choices are detailed in
Appendix~\ref{app:implementation}. For a counterfactual transition $\mathcal T$,
averaging over the demonstrator's action defines the induced transition
\begin{equation}
\label{eq:induced-transition}
P^{\mathcal T}_a(s'\mid s)
=
\sum_{a'\in\mathcal A}
\mathcal T(s,a,a',s')\,P(a'\mid s),
\end{equation}
which coincides with $P_a(s'\mid s)$ for the true counterfactual
transition. As previously defined, we write $\mathcal M_{\mathcal T}$ for the MDP with
transition $P^{\mathcal T}_a$ (Sec.~\ref{sec:counterfactual-modeling}).

\begin{algorithm}[t]
\caption{Causal Contrastive RL (CauCRL)}
\label{alg:caucrl}
\begin{algorithmic}[1]
\Require $\mathcal D^{+}$, $\mathcal D^{-}$, rounds $R$, horizon $H$
\State Fit nominal policy $\mu_\phi$ on $\mathcal D^{+}$
\State Fit $\mathcal T_\eta$ by minimizing Eq.~\eqref{eq:ett-loss}
\State Initialize critic $f_\omega$ and actor $\pi_\theta$
\For{$r=1,\ldots,R$}
    \State $\widetilde{\mathcal D}\gets\Call{CtfRollout}{\pi_\theta,\mathcal T_\eta,\mu_\phi}$
    \State $\pi_\theta\gets\Call{PolicyImprovement}{\widetilde{\mathcal D},\pi_\theta}$
\EndFor
\State \Return $\pi_\theta$
\Procedure{CtfRollout}{$\pi,\mathcal T,\mu$}
    \State Sample $(s,a)\sim\mathcal D^{+}$; set
    $\tilde s^{(t)}=s$, $\tilde a^{(t)}=a$
    \For{$h=t,\ldots,t+H-1$}
        \State Sample
        $a^{\prime(h)}\sim\mu(\cdot\mid\tilde s^{(h)})$
        \State Sample
        $\tilde s^{(h+1)}\sim
        \mathcal T
        (\tilde s^{(h)},\tilde a^{(h)},a^{\prime(h)},\cdot)$
        \State Sample
        $\tilde a^{(h+1)}\sim
        \pi(\cdot\mid\tilde s^{(h+1)})$
    \EndFor
    \State \Return rollouts $\widetilde{\mathcal D}$
\EndProcedure
\Procedure{PolicyImprovement}{$\widetilde{\mathcal D},\pi$}
    \For{each gradient step}
        \State Sample $(s,a,\tilde s^+)$ from $\widetilde{\mathcal D}$ and
        $s^-\sim\mathcal D^{-}$
        \State Update $f_\omega$ by Eq.~\eqref{eq:caucrl-nce} and $\pi$ by
        Eq.~\eqref{eq:caucrl-actor}
    \EndFor
    \State \Return $\pi$
\EndProcedure
\end{algorithmic}
\end{algorithm}

A counterfactual rollout samples from
$P^{\mathcal T_\eta}_a$ under the policy $\pi$, with $\mu_\phi$ in place
of $P(a'\mid s)$ (\textsc{CtfRollout} in Algorithm~\ref{alg:caucrl}). It starts from a
state--action pair $(s,a)$ at time $t$, with $\tilde s^{(t)}=s$ and
$\tilde a^{(t)}=a$; at each step $h\geq t$, the demonstrator's action
$a^{\prime(h)}$ is drawn from the nominal policy, the next state
$\tilde s^{(h+1)}$ from $\mathcal T_\eta$, and the learner's next action
from $\pi$; $a'$ is resampled even at the first step, since the
recorded action would give the confounded
$\mathcal T(s,a,a,\cdot)=P(\cdot\mid s,a)$.
Algorithm~\ref{alg:caucrl} uses the current actor as the rollout policy,
which may also be kept fixed across rounds.

Replacing the positive samples from $P^+$ with these rollouts
yields the counterfactual NCE objective
\begin{align}
\max_{\omega}\;
\mathbb{E}_{\substack{(s,a,\tilde s^+)\sim\textsc{CtfRollout}\\ s^-\sim P^-}}
&\Big[
\log \sigma\!\left(f_\omega(s,a,\tilde s^+)\right)
\label{eq:caucrl-nce}
\\[-8pt]
&+
\log\!\left(
1-\sigma\!\left(f_\omega(s,a,s^-)\right)
\right)
\Big],
\nonumber
\end{align}
where each pair $(s,a)$ from $\mathcal D^{+}$ takes the rollout state
$\tilde s^+=\tilde S^{(t+K)}$ as its positive goal, with
$P(K=k)=(1-\gamma)\gamma^k$ for $k\geq0$, and the negative goals are
unchanged. The following proposition shows that the optimal critic
recovers the Q-function of $\pi$ in the MDP induced by the
counterfactual transition.\footnote{The constant $1/P^-(s^*)$ requires
$P^-(s^*)>0$. This can always be ensured by mixing random states from
the whole dataset into the negative samples, as in the original CRL
\citep{NEURIPS2022_e7663e97}. These additional states serve only as
negatives for the critic in Eq.~\eqref{eq:caucrl-nce}; they are not
used to train the counterfactual transition in
Eq.~\eqref{eq:ett-loss}.}

\begin{proposition}
\label{prop:ctf-critic}
Suppose that \textsc{CtfRollout} in Eq.~\eqref{eq:caucrl-nce} uses a
counterfactual transition $\mathcal T$ and the nominal policy
$\mu=P(\cdot\mid s)$. The critic that optimizes
Eq.~\eqref{eq:caucrl-nce} is the Q-function of $\pi$ in
$\mathcal M_{\mathcal T}$, up to a multiplicative constant
$1/P^-(s^*)$:
\[
\exp f^*(s,a,s^*)
=
\frac{1}{P^-(s^*)}\cdot Q^\pi_{\mathcal T}(s,a).
\]
\end{proposition}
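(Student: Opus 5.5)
The argument splits into two parts. The first identifies the distribution of the positive sample $\tilde s^+$ produced by \textsc{CtfRollout}. The second invokes the standard NCE optimality argument used for Eq.~\eqref{eq:crl-nce}.

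\emph{Step 1: the rollout is a trajectory of $\mathcal M_{\mathcal T}$ under $do(A^{(t)}=a)$ followed by $\pi$.} With $\mu=P(\cdot\mid s)$, each step of \textsc{CtfRollout} draws $a'^{(h)}\sim P(\cdot\mid\tilde s^{(h)})$ and then $\tilde s^{(h+1)}\sim\mathcal T(\tilde s^{(h)},\tilde a^{(h)},a'^{(h)},\cdot)$. Since $a'^{(h)}$ is drawn afresh and independently of the past given $\tilde s^{(h)}$, marginalizing it shows that the conditional law of $\tilde s^{(h+1)}$ given the history is exactly the induced transition $P^{\mathcal T}_{\tilde a^{(h)}}(\cdot\mid\tilde s^{(h)})$ of Eq.~\eqref{eq:induced-transition}. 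The first action is fixed to $a$, and later actions are drawn from $\pi$. By induction on $k$, the law of $\tilde S^{(t+k)}$ given $\tilde S^{(t)}=s$ therefore equals $P_{\mathcal T;a,\pi}(S^{(t+k)}=\cdot\mid S^{(t)}=s)$ in $\mathcal M_{\mathcal T}$. Mixing over $K$ with $P(K=k)=(1-\gamma)\gamma^k$ then gives
\[
P(\tilde s^+=s^*\mid s,a)=(1-\gamma)\sum_{k\ge0}\gamma^k P_{\mathcal T;a,\pi}(S^{(t+k)}=s^*\mid S^{(t)}=s)=d^\pi_{\mathcal T}(s^*\mid s,a).
\]
To match the paper's notation, I will identify $d^\pi_{\mathcal T}$ with $Q^\pi_{\mathcal T}(s,a)=\sum_{a'}Q^\pi_{\mathcal T}(s,a,a')P(a'\mid s)$. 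This is Lemma~\ref{lem:bellman}, or it can be checked directly. Averaging the counterfactual Bellman equation~\eqref{eq:ctf-bellman} over $a'\sim P(\cdot\mid s)$ produces the ordinary Bellman equation for the occupancy in $\mathcal M_{\mathcal T}$. That equation is a $\gamma$-contraction, so its fixed point is unique and equals the series above. The uniqueness statement in Proposition~\ref{prop:pessimistic-occupancy} supplies this.

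\emph{Step 2: the NCE optimum.} For each fixed $(s,a)$, the objective~\eqref{eq:caucrl-nce} decomposes pointwise in the goal $g$. With $p(g)=d^\pi_{\mathcal T}(g\mid s,a)$ and $q(g)=P^-(g)$, the integrand is
\[
p(g)\log\sigma(f)+q(g)\log(1-\sigma(f)).
\]
I will treat the critic class as unrestricted, as in the original CRL analysis. Differentiating in $f$ gives $\sigma(f^*)=p/(p+q)$. At this point two normalizations are possible. The plain binary-NCE optimum is $\exp f^*=p/q$, and the original CRL convention also writes it this way, so I adopt $\exp f^*=p/q$ exactly as in the statement for Eq.~\eqref{eq:crl-nce}. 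Substituting Step 1 then gives $\exp f^*(s,a,s^*)=Q^\pi_{\mathcal T}(s,a)/P^-(s^*)$. The footnote's assumption $P^-(s^*)>0$ guarantees that the ratio is well defined.

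The main obstacle is Step 1, specifically the claim that resampling $a'$ independently at each step, including the first, yields exactly the Markov chain of $\mathcal M_{\mathcal T}$. This requires that $a'^{(h)}$ not be the recorded action and that the confounder be i.i.d. across steps. I would state the induction carefully using the tower property over the history $(\tilde s^{(t)},\tilde a^{(t)},\dots,\tilde s^{(h)},\tilde a^{(h)})$. The remaining steps are routine, provided $(s,a)$ lies in the support of $\mathcal D^+$ so that the pointwise optimization is meaningful.
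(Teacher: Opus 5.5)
Your proposal is correct and follows the paper's proof essentially step for step: you marginalize the freshly resampled $a^{\prime(h)}\sim P(\cdot\mid\tilde s^{(h)})$ to get the induced kernel $P^{\mathcal T}_{\tilde a^{(h)}}$, mix over the geometric $K$ to obtain $d^\pi_{\mathcal T}(s^*\mid s,a)$, take the pointwise binary-NCE optimum $\exp f^*=p/q$ (the common $(s,a)$-marginal cancels), and identify $d^\pi_{\mathcal T}$ with $Q^\pi_{\mathcal T}$ via Lemma~\ref{lem:bellman}(c). Two harmless slips: there is no choice of normalization, since $\sigma(f^*)=p/(p+q)$ is algebraically equivalent to $\exp f^*=p/q$; and the i.i.d.\ confounder assumption plays no role here, because the rollout is Markov in $\mathcal M_{\mathcal T}$ by the construction of \textsc{CtfRollout} (independence of the true $U^{(t)}$ across steps matters only for Proposition~\ref{prop:pessimistic-occupancy}).
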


The proof is given in Appendix~\ref{app:theory}. Since
$P^-(s^*)$ does not depend on the action,
$\arg\max_a f^*(s,a,s^*)=\arg\max_a Q^\pi_{\mathcal T}(s,a)$:
the critic retains the value interpretation of contrastive RL, now in
the MDP induced by the counterfactual transition. In practice,
rollouts are truncated at a horizon $H$; sampling $K$ only up to $H$
changes the distribution of positive goals by at most $\gamma^{H+1}$ in
total variation. Details of the rollout construction are given in
Appendix~\ref{app:implementation}.

\textbf{Policy improvement.}
CauCRL retains the actor update used in standard offline CRL. The actor
selects actions with high critic values while remaining close to the
actions in the offline demonstrations:
\begin{equation}
\label{eq:caucrl-actor}
\! \max_{\theta}\;
\mathbb E_{\substack{(s,a')\sim\mathcal D^{+}\\ a\sim\pi_\theta(\cdot\mid s)}}
\Big[
    f_\omega(s,a,s^*)
    +
    \lambda_{\mathrm{BC}}
    \log
    \pi_\theta(a'\mid s)
\Big].
\end{equation}
The critic evaluates the policy's own action $a$, while the
behavioral-cloning term, weighted by $\lambda_{\mathrm{BC}}$, keeps
$\pi_\theta$ close to the recorded action $a'$. By Proposition~\ref{prop:ctf-critic}, higher critic values
mean higher Q-values in $\mathcal M_{\mathcal T}$, so this update
improves the policy in that MDP.

\begin{proposition}
\label{prop:policy-improvement}
Assume that states and actions are tabular, $\mu=P(\cdot\mid s)$, and
$P^-(s^*)>0$. Let the critic $f$ be $\delta$-accurate for rollouts of
$\pi$ under $\mathcal T$, i.e.,
$|P^-(s^*)\exp f(s,a,s^*)-Q^\pi_{\mathcal T}(s,a)|\leq\delta$ for all
$(s,a)$, and let $\pi'(s)\in\arg\max_a f(s,a,s^*)$ be the policy
obtained after one iteration of \textsc{PolicyImprovement} with
$\lambda_{\mathrm{BC}}=0$. Then $\pi'$ achieves a higher goal-reaching
occupancy than $\pi$ in $\mathcal M_{\mathcal T}$, up to the critic
error:
\[
Q^{\pi'}_{\mathcal T}(s,a)
\geq
Q^{\pi}_{\mathcal T}(s,a)-\frac{2\gamma\delta}{1-\gamma}
\quad\text{for all }(s,a).
\]
\end{proposition}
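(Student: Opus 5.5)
Let $\hat Q(s,a)=P^-(s^*)\exp f(s,a,s^*)$. Since $P^-(s^*)>0$ does not depend on $a$, $\pi'(s)\in\arg\max_a f(s,a,s^*)=\arg\max_a\hat Q(s,a)$. The plan is the standard approximate policy iteration argument, carried out in the candidate MDP $\mathcal M_{\mathcal T}$ with induced transition $P^{\mathcal T}_a(s'\mid s)$ from Eq.~\eqref{eq:induced-transition}. Because $\mu=P(\cdot\mid s)$, Lemma~\ref{lem:bellman} (equivalently, averaging Eq.~\eqref{eq:ctf-bellman} over $a'$) says that $Q^\pi_{\mathcal T}(s,a)$ is the ordinary Q-function of $\pi$ in this MDP. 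It therefore satisfies
\[
Q^\pi_{\mathcal T}(s,a)=(1-\gamma)\mathbbm{1}\{s=s^*\}+\gamma\sum_{s'}P^{\mathcal T}_a(s'\mid s)\,V^\pi_{\mathcal T}(s'),
\]
where $V^\pi_{\mathcal T}(s')=\sum_{\tilde a}\pi(\tilde a\mid s')Q^\pi_{\mathcal T}(s',\tilde a)$. The tabular assumption makes all sums finite, and the Bellman operators are $\gamma$-contractions in sup-norm.

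\textbf{Step 1 (greedy step is nearly improving).} For every state $s$, I will show $Q^\pi_{\mathcal T}(s,\pi'(s))\ge V^\pi_{\mathcal T}(s)-2\delta$. By $\delta$-accuracy and the argmax property,
\[
Q^\pi_{\mathcal T}(s,\pi'(s))\ge\hat Q(s,\pi'(s))-\delta\ge\sum_{\tilde a}\pi(\tilde a\mid s)\hat Q(s,\tilde a)-\delta\ge V^\pi_{\mathcal T}(s)-2\delta .
\]
Equivalently, $T^{\pi'}Q^\pi_{\mathcal T}\ge Q^\pi_{\mathcal T}-2\gamma\delta$ pointwise. Here $T^{\pi'}Q=(1-\gamma)\mathbbm{1}\{s=s^*\}+\gamma P^{\mathcal T}_aQ(\cdot,\pi'(\cdot))$, and the factor $\gamma$ arises because the greedy substitution happens only at the next state.

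\textbf{Step 2 (propagate through the Bellman recursion).} I will apply $T^{\pi'}$ repeatedly, using that it is monotone and that it shifts constants by a factor $\gamma$. Induction gives
\[
(T^{\pi'})^nQ^\pi_{\mathcal T}\ge Q^\pi_{\mathcal T}-2\gamma\delta(1+\gamma+\dots+\gamma^{n-1}).
\]
By the contraction property, $(T^{\pi'})^nQ^\pi_{\mathcal T}\to Q^{\pi'}_{\mathcal T}$. Letting $n\to\infty$ yields $Q^{\pi'}_{\mathcal T}\ge Q^\pi_{\mathcal T}-2\gamma\delta/(1-\gamma)$ for all $(s,a)$, which is the claim.

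\textbf{Main obstacle.} The delicate point is justifying that $Q^{\pi'}_{\mathcal T}$, defined via the counterfactual Bellman equation~\eqref{eq:ctf-bellman}, is the fixed point of $T^{\pi'}$ in $\mathcal M_{\mathcal T}$. This requires $\mu=P(\cdot\mid s)$ exactly, so that averaging over $\tilde a'$ collapses $\mathcal T$ into $P^{\mathcal T}_a$. Uniqueness of that fixed point comes from Proposition~\ref{prop:pessimistic-occupancy}. With these in place, the rest is the routine monotonicity and contraction bookkeeping above.
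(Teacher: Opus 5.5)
Your proposal is correct and follows the paper's proof essentially step for step: the same $2\delta$-greedy chain through $\widehat Q=P^-(s^*)\exp f$, the same one-step bound $\mathcal B^{\pi'}_{\mathcal T}Q^\pi_{\mathcal T}\geq Q^\pi_{\mathcal T}-2\gamma\delta$, and the same induction via monotonicity, the constant-shift identity, and contraction to the fixed point identified by Lemma~\ref{lem:bellman}(c). One small correction to your ``main obstacle'' remark: collapsing $\mathcal T$ into $P^{\mathcal T}_a$ uses the true $P(\tilde a'\mid s')$ built into Eq.~\eqref{eq:ctf-bellman}, not $\mu$. The hypothesis $\mu=P(\cdot\mid s)$ therefore enters only through the $\delta$-accuracy assumption, via Proposition~\ref{prop:ctf-critic}.
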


The proof (Appendix~\ref{app:theory}) mirrors the
approximate policy improvement result of
\citet[Lemma~4.2]{NEURIPS2022_e7663e97}, with $\mathcal M_{\mathcal T}$
in place of the true environment. For the Bayes-optimal
critic of Proposition~\ref{prop:ctf-critic}, $\delta=0$ and $\pi'$
improves upon $\pi$ exactly, so alternating \textsc{CtfRollout} and
\textsc{PolicyImprovement} in Algorithm~\ref{alg:caucrl} performs
approximate policy iteration in $\mathcal M_{\mathcal T_\eta}$.

Proposition~\ref{prop:policy-improvement} guarantees improvement in
$\mathcal M_{\mathcal T}$. When $\mathcal T$ is the worst-case
counterfactual transition of Proposition~\ref{prop:pessimistic-occupancy},
$Q^\pi_{\mathcal T}$ lower-bounds the true occupancy. With the
pessimistic $\mathcal T_\eta$ learned in
Sec.~\ref{sec:counterfactual-training}, CauCRL improves an approximation
of this lower bound as a proxy for policy improvement in the real
environment.

\section{Experiments}
\label{sec:experiments}

\begin{figure}[t]
    \centering
    \includegraphics[
        height=0.29\columnwidth,
        trim=6 6 6 6,
        clip
    ]{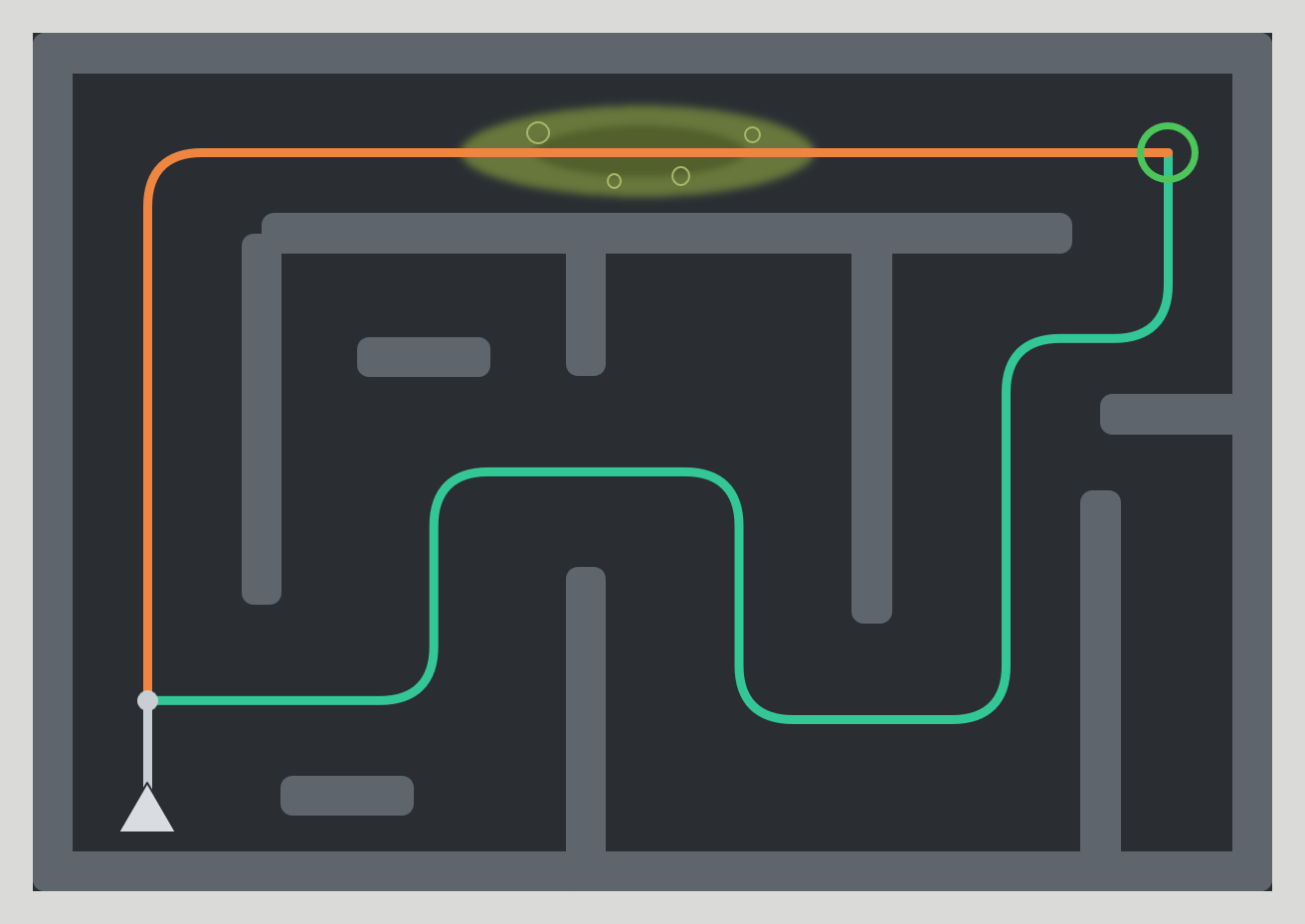}
    \hspace{0.025\columnwidth}
    \includegraphics[
        height=0.29\columnwidth
    ]{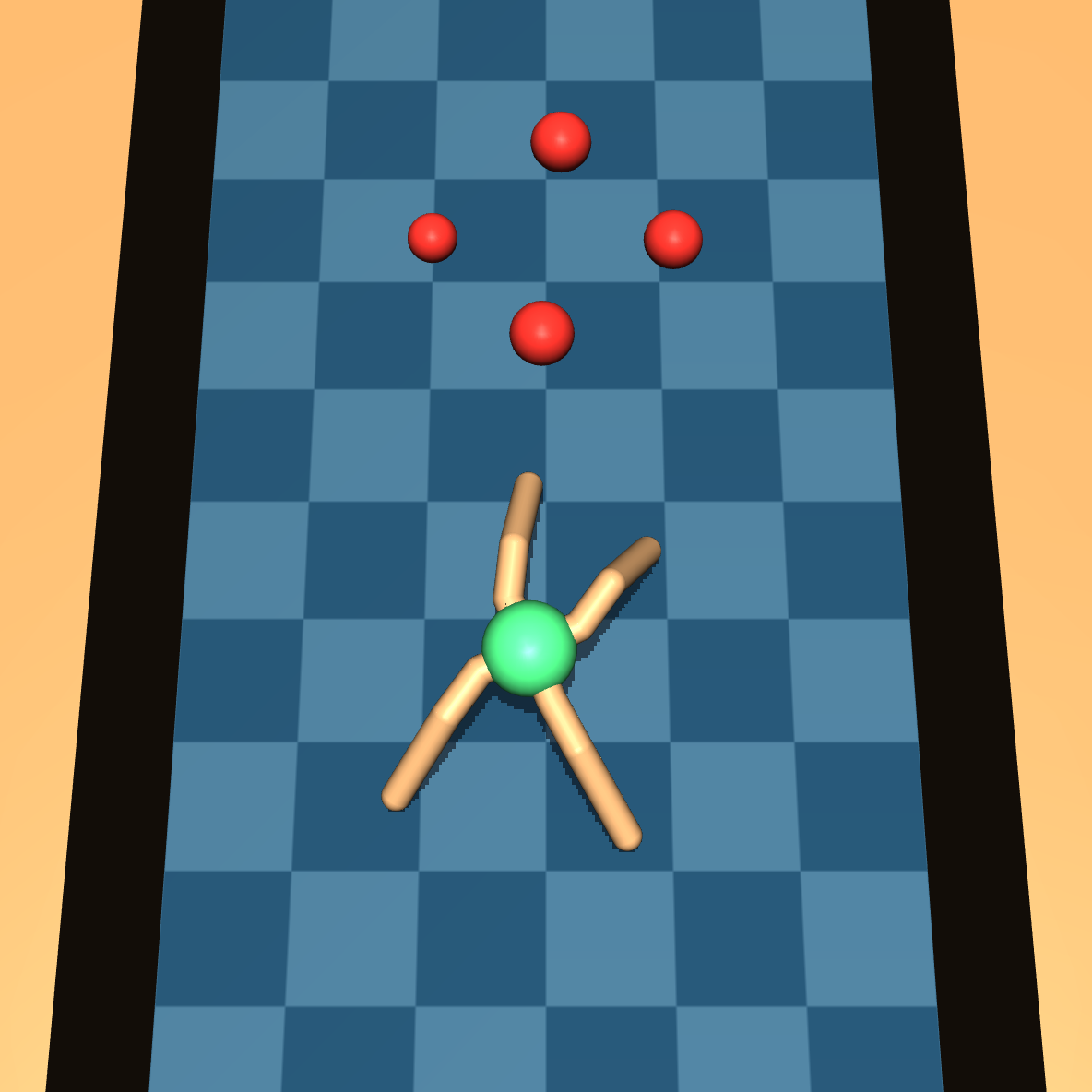}
\caption{
Continuous-control environments: (a) PointMaze with a stochastic swamp
region (yellow) and (b) AntMaze with falling-rock hazards (red).
}
    \label{fig:environments}
\end{figure}

We evaluate CauCRL on three goal-conditioned navigation tasks with
unobserved confounding and increasing control complexity: Windy Corridor
with discrete actions, PointMaze with continuous planar control, and
AntMaze with eight-dimensional continuous actions
(Fig.~\ref{fig:environments}).
In each environment, the demonstrator observes latent information that
affects both its actions and subsequent transitions but is hidden from
the learner and all evaluated methods.

We compare CauCRL against four goal-conditioned offline learning
baselines.
\textbf{CRL} is standard contrastive reinforcement
learning~\citep{NEURIPS2022_e7663e97} and our primary baseline; it uses
the same demonstrations and actor regularization as CauCRL but trains
its critic on recorded rather than counterfactual future goals, so
comparing the two isolates the effect of counterfactual rollouts.
\textbf{GCBC} performs behavioral cloning on hindsight-relabeled
trajectories~\citep{lynch2020learning,ghosh2021learning}.
\textbf{GCIQL} is a goal-conditioned variant of implicit
Q-learning~\citep{kostrikov2022offline}, and
\textbf{HIQL} uses latent future states as high-level actions for
long-horizon goal-conditioned control~\citep{park2023hiql}.
All baselines treat recorded next states as outcomes of the recorded
actions and do not model the confounder.
For continuous-control tasks, CRL and CauCRL share the actor--critic
architecture and optimizer; full setup and training details are in
Appendices~\ref{app:pointmaze-setup}--\ref{app:hyperparameters}.

We report mean goal-reaching success and, where applicable, robust-route
rates over five seeds, with $\pm$ s.e.m.\ over the final 10\%
of evaluation checkpoints; per-seed results are in
Appendix~\ref{sec:result-details}.

\begin{figure*}[t]
  \centering
  \includegraphics[width=\textwidth]{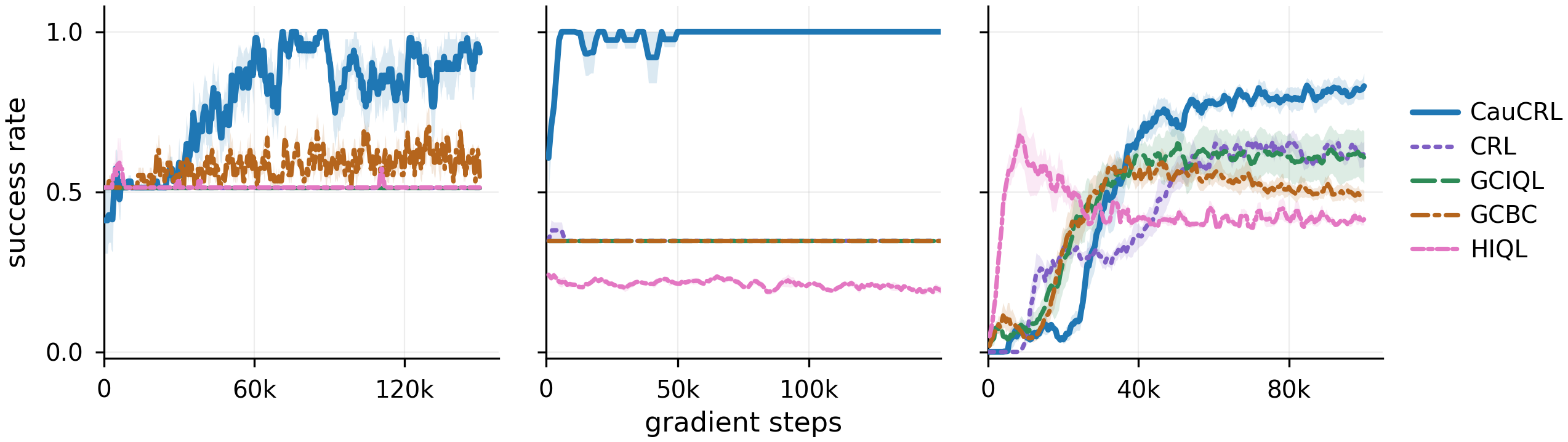}
\caption{
Success rates over training on Windy Corridor (left), PointMaze (middle),
and AntMaze (right). Curves show means over five random seeds, with
shaded regions indicating standard errors.
}
  \label{fig:results}
\end{figure*}

\textbf{Experiment 1: Windy Corridor.}
We revisit the $15\times15$ Windy Corridor task of
Example~\ref{ex:windy} (Appendix~\ref{app:windy-setup}), with a short
passage adjacent to lava and a longer safe route.
The demonstrator primarily takes the shortcut and uses the observed wind
to pause when forward motion is unsafe, while a smaller fraction of
demonstrations follows the safe route.
Because the wind is resampled at every step and hidden from the learner,
only the safe route reaches the goal reliably.

In Fig.~\ref{fig:results} (left), CauCRL achieves a mean success
rate of $0.919\pm0.036$ and a robust-route rate of $1.00$, despite
the safe route requiring about 39 steps instead of 13.
Although its success varies across seeds ($0.797$ to $1.000$), every
evaluation episode takes the safe route.
All baselines remain below $0.60$ success; CRL, GCIQL, and HIQL reach
$0.513$ on every seed and converge to the confounded shortcut.
Because demonstrations pair shortcut actions almost exclusively with
successful outcomes, confounding-unaware critics overvalue this route.
CauCRL assigns pessimistic outcomes to moving forward where the
demonstrator might have waited, so its critic prefers the safe route.

\textbf{Experiment 2: PointMaze.}
The agent observes a four-frame history of planar positions and the goal
(16 dimensions) and selects two-dimensional velocities
(Appendix~\ref{app:pointmaze-setup}).
The shortcut crosses three swamp cells, each active with probability
$0.30$ at every step; entering an active cell freezes the agent for the
remainder of the 50-step episode.
The demonstrator observes the swamp condition and primarily follows the
shortcut, waiting when necessary, while a smaller set of demonstrations
takes the detour.

In Fig.~\ref{fig:results} (middle), CauCRL achieves both a mean success
rate and detour rate of $1.00$.
CRL, GCBC, and GCIQL each achieve $0.347$ success and never take the
detour, close to the probability $(1-0.30)^3\approx0.343$ that all three
cells are inactive when crossed.
HIQL performs worse ($0.176\pm0.013$).
Since the swamp is hidden and resampled at every step, waiting cannot
improve these odds for the learner; CauCRL instead identifies the detour
as the reliable route.

\textbf{Experiment 3: AntMaze.}
The Ant has a 31-dimensional observation, an eight-dimensional action
space, and an 800-step horizon (Appendix~\ref{app:antmaze-setup}).
The direct route crosses two rockfall zones, each activated with
probability $0.40$ per episode, while a longer route avoids them;
contact with a falling rock terminates the episode.
The demonstrator observes the rockfall state, but the learner observes
only the Ant state and goal.
Because the demonstrator crosses only when the route is safe, its
trajectories contain no rock-contact failures and make direct-route
actions appear safe.
CauCRL additionally trains its counterfactual transition on forced
crossings that override a demonstrator decision to wait and expose the
resulting rock collisions.

In Fig.~\ref{fig:results} (right), CauCRL achieves
$0.815\pm0.030$ success, compared with $0.629\pm0.015$ for CRL, an
absolute improvement of $0.186$.
GCIQL reaches $0.616\pm0.076$, while GCBC and HIQL achieve
$0.498\pm0.022$ and $0.409\pm0.006$, respectively.
CauCRL's lowest seed ($0.712$) exceeds CRL's highest ($0.677$).

\section{Conclusion}

We study offline goal-conditioned reinforcement learning under unobserved confounding, where demonstrators may use information unavailable to the learner. Standard contrastive RL can then learn spurious action--outcome associations from observational future states. To address this problem, we introduce Causal Contrastive Reinforcement Learning (CauCRL), which learns a pessimistic counterfactual transition from positive and negative demonstrations and uses counterfactual rollouts for contrastive critic learning. Our causal formulation separates identified factual transitions from unidentified counterfactual outcomes and yields a worst-case goal-reaching occupancy that lower-bounds the true occupancy. Experiments across discrete and continuous-control tasks show that CauCRL consistently improves robustness over confounding-unaware baselines. These results highlight the value of counterfactual future-state supervision under confounding. Future work will evaluate CauCRL beyond maze-style navigation, including goal-conditioned manipulation and more diverse sources of unobserved confounding.

\subsection*{AI Use Statement}

In this work, we used generative AI tools to assist with
research ideation, methodological discussions, experimental
design and diagnostics, code development and debugging,
and checking mathematical arguments. We also used these
tools for retrieving and summarizing related work and for
drafting and revising portions of the manuscript.

All AI-assisted outputs were reviewed and verified by the
authors. Mathematical arguments were checked against the
stated assumptions and derivations, code was inspected
and validated through experiments, and literature summaries
were verified against the original sources. The authors
made all final decisions regarding the methodology,
experiments, interpretations, and presentation, and take
full responsibility for the content of this work.

\bibliographystyle{apalike}
\bibliography{references}
\appendix
\section{Proofs}
\label{app:theory}

This appendix collects the proofs of the three propositions in the main
text. Proposition~\ref{prop:pessimistic-occupancy} shows that minimizing
the counterfactual occupancy over all compatible counterfactual
transitions lower-bounds the true goal-reaching occupancy, which
underlies the pessimistic construction of
Sec.~\ref{sec:counterfactual-modeling};
Proposition~\ref{prop:ctf-critic} identifies the Bayes-optimal critic
trained on counterfactual rollouts, which underlies the counterfactual
NCE objective of Sec.~\ref{sec:counterfactual-rollouts}; and
Proposition~\ref{prop:policy-improvement} shows that greedy updates with
an approximately optimal critic improve the policy in the MDP induced by
the counterfactual transition. All three proofs follow the standard
dynamic-programming playbook: each quantity of interest is characterized
as the unique fixed point of a $\gamma$-contractive Bellman operator,
and identities and inequalities between these quantities are obtained
by verifying them for the corresponding operators.

Throughout, we fix a goal $s^*\in\mathcal S$ and a discount factor
$\gamma\in(0,1)$, and all functions are bounded and equipped with the
supremum norm $\|\cdot\|_\infty$. Sums over states and actions become
integrals for continuous spaces, and all arguments carry over verbatim.
Entries $(s,a,a')$ with $P(a'\mid s)=0$ receive zero weight in
Eq.~\eqref{eq:ctf-bellman} and in the definition of $\underline Q^\pi$,
so they play no role and can be fixed arbitrarily. For a policy $\pi$
and a counterfactual transition $\mathcal T\in\mathbb T$, we define the
counterfactual Bellman operator, acting on functions $q(s,a,a')$, as the
right-hand side of Eq.~\eqref{eq:ctf-bellman},
\[
\begin{aligned}
(\mathcal K^\pi_{\mathcal T}q)(s,a,a')
&=
(1-\gamma)\mathbbm{1}\{s=s^*\}
\\
&\quad+
\gamma\sum_{s'}\mathcal T(s,a,a',s')
\sum_{\tilde a,\tilde a'}
\pi(\tilde a\mid s')
\\
&\qquad\times
P(\tilde a'\mid s')\,q(s',\tilde a,\tilde a'),
\end{aligned}
\]
and the Bellman operator of $\pi$ in $\mathcal M_{\mathcal T}$, acting on
functions $q(s,a)$, as
\[
\begin{aligned}
(\mathcal B^\pi_{\mathcal T}q)(s,a)
&=
(1-\gamma)\mathbbm{1}\{s=s^*\}
\\
&\quad+
\gamma\sum_{s'}P^{\mathcal T}_a(s'\mid s)
\sum_{\tilde a}\pi(\tilde a\mid s')\,q(s',\tilde a),
\end{aligned}
\]
where $P^{\mathcal T}_a$ is the induced transition in
Eq.~\eqref{eq:induced-transition}. Finally,
$d^\pi_{\mathcal T}(s^*\mid s,a)$ denotes the goal-reaching occupancy of
$\pi$ in $\mathcal M_{\mathcal T}$, i.e., Eq.~\eqref{eq:goal-occupancy}
with $P_a$ replaced by $P^{\mathcal T}_a$, and $P^{\mathcal T}_{a,\pi}$
denotes the trajectory distribution in $\mathcal M_{\mathcal T}$ that
sets $A^{(t)}$ to $a$ and lets subsequent actions follow $\pi$. The
operator facts shared by the three proofs are encapsulated in the
following lemma.

\begin{lemma}
\label{lem:bellman}
Let $\pi$ be a policy and $\mathcal T\in\mathbb T$, and let $B$ denote
either $\mathcal K^\pi_{\mathcal T}$ or $\mathcal B^\pi_{\mathcal T}$.
\begin{itemize}
\item[(a)] $B$ is monotone, satisfies $B(q-c)=Bq-\gamma c$ for every
constant $c$, and is a $\gamma$-contraction in $\|\cdot\|_\infty$.
Consequently, $B$ has a unique bounded fixed point, which is the limit
of $B^nq$ as $n\to\infty$ for every bounded $q$.
\item[(b)] The fixed point of $\mathcal B^\pi_{\mathcal T}$ is
$d^\pi_{\mathcal T}(s^*\mid s,a)$.
\item[(c)] If $Q^\pi_{\mathcal T}(s,a,a')$ is the fixed point of
$\mathcal K^\pi_{\mathcal T}$, then
$Q^\pi_{\mathcal T}(s,a)=\sum_{a'}Q^\pi_{\mathcal T}(s,a,a')\,P(a'\mid s)$
is the fixed point of $\mathcal B^\pi_{\mathcal T}$. In particular,
$Q^\pi_{\mathcal T}(s,a)=d^\pi_{\mathcal T}(s^*\mid s,a)$, i.e.,
$Q^\pi_{\mathcal T}$ is the Q-function of $\pi$ in
$\mathcal M_{\mathcal T}$.
\end{itemize}
\end{lemma}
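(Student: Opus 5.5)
For part (a), I will treat both operators as affine maps $Bq=r+\gamma Pq$. Here $r(s,\cdot)=(1-\gamma)\mathbbm{1}\{s=s^*\}$, and $P$ is a nonnegative linear operator whose kernel rows sum to one.

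For $\mathcal K^\pi_{\mathcal T}$ the kernel sends $(s,a,a')$ to $(s',\tilde a,\tilde a')$ with weight $\mathcal T(s,a,a',s')\pi(\tilde a\mid s')P(\tilde a'\mid s')$. Each factor is a probability distribution, so for fixed $(s,a,a')$ the weights sum to one. The same holds for $\mathcal B^\pi_{\mathcal T}$ with weight $P^{\mathcal T}_a(s'\mid s)\pi(\tilde a\mid s')$. The induced transition is itself a distribution because it is a mixture of the distributions $\mathcal T(s,a,a',\cdot)$ with weights $P(a'\mid s)$.

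Three properties follow from this representation:
\begin{itemize}
\item[(i)] Monotonicity holds because the weights are nonnegative.
\item[(ii)] Since $P$ preserves constants, $B(q-c)=r+\gamma P q-\gamma c=Bq-\gamma c$.
\item[(iii)] $\|Bq_1-Bq_2\|_\infty=\gamma\|P(q_1-q_2)\|_\infty\le\gamma\|q_1-q_2\|_\infty$, because averaging under a probability kernel does not increase the sup norm.
\end{itemize}
Both operators map bounded functions to bounded functions. Banach's fixed point theorem on the complete space of bounded functions with $\|\cdot\|_\infty$ then gives a unique fixed point and convergence $B^nq\to$ fixed point for every bounded $q$. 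In the continuous case I will assume the usual measurability so that the integrals are defined.

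For part (b), I will show directly that $d^\pi_{\mathcal T}$ satisfies $\mathcal B^\pi_{\mathcal T}d=d$; uniqueness from (a) then identifies it as the fixed point. Write $d=(1-\gamma)\sum_k\gamma^k p_k(s,a)$, where $p_k(s,a)=P^{\mathcal T}_{a,\pi}(S^{(t+k)}=s^*\mid S^{(t)}=s)$. The $k=0$ term is $\mathbbm{1}\{s=s^*\}$. For $k\ge1$, the Markov factorization of $P^{\mathcal T}_{a,\pi}$ gives
\[
p_k(s,a)=\sum_{s'}P^{\mathcal T}_a(s'\mid s)\sum_{\tilde a}\pi(\tilde a\mid s')\,p_{k-1}(s',\tilde a),
\]
using time-homogeneity. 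Shifting the index and exchanging sums (justified since the series is absolutely convergent and bounded by $1$) yields $d=(1-\gamma)\mathbbm{1}\{s=s^*\}+\gamma\sum_{s'}P^{\mathcal T}_a(s'\mid s)\sum_{\tilde a}\pi(\tilde a\mid s')d(s',\tilde a)$. This is the required fixed-point equation.

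For part (c), let $Q=Q^\pi_{\mathcal T}(s,a,a')$ be the fixed point of $\mathcal K^\pi_{\mathcal T}$, and define the averaging map $(Aq)(s,a)=\sum_{a'}q(s,a,a')P(a'\mid s)$. The key step is the intertwining identity $A\mathcal K^\pi_{\mathcal T}=\mathcal B^\pi_{\mathcal T}A$. To verify it, average the right-hand side of $\mathcal K$ over $a'\sim P(\cdot\mid s)$:
\begin{itemize}
\item[(i)] The reward term is constant in $a'$ and has total weight one, so it is unchanged.
\item[(ii)] In the recursive term, $\sum_{a'}\mathcal T(s,a,a',s')P(a'\mid s)=P^{\mathcal T}_a(s'\mid s)$ by Eq.~\eqref{eq:induced-transition}.
\item[(iii)] The inner sum $\sum_{\tilde a'}P(\tilde a'\mid s')q(s',\tilde a,\tilde a')$ equals $(Aq)(s',\tilde a)$.
\end{itemize}
Then $AQ=A\mathcal K^\pi_{\mathcal T}Q=\mathcal B^\pi_{\mathcal T}AQ$, so $AQ$ is a bounded fixed point of $\mathcal B^\pi_{\mathcal T}$. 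By (a) and (b), $AQ=d^\pi_{\mathcal T}(s^*\mid s,a)$.

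The only delicate step is the intertwining identity. One must check that the demonstrator-action weight $P(\tilde a'\mid s')$ in $\mathcal K$ is exactly what turns the inner sum into $A$ applied at $s'$. One must also check that entries with $P(a'\mid s)=0$ are harmless, which holds since they get zero weight under $A$. Everything else is routine contraction theory.
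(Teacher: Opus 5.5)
Your proposal is correct and follows the paper's proof essentially step by step: (a) via the stochastic-kernel form $Bq=(1-\gamma)\mathbbm{1}\{s=s^*\}+\gamma\sum_y K(x,y)q(y)$ and Banach's theorem, (b) via the one-step Markov recursion under $P^{\mathcal T}_{a,\pi}$ plus uniqueness, and (c) by averaging over $a'\sim P(\cdot\mid s)$. Your intertwining identity $A\mathcal K^\pi_{\mathcal T}=\mathcal B^\pi_{\mathcal T}A$ is the operator-level form of the paper's computation, which instead averages the fixed-point equation $Q^\pi_{\mathcal T}=\mathcal K^\pi_{\mathcal T}Q^\pi_{\mathcal T}$ directly.
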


\begin{proof}
We first prove (a). Both operators are of the form
\[
(Bq)(x)
=
(1-\gamma)\mathbbm{1}\{s=s^*\}
+
\gamma\sum_{y}K(x,y)\,q(y),
\]
where $s$ is the state component of $x$. For
$\mathcal K^\pi_{\mathcal T}$, we have $x=(s,a,a')$,
$y=(s',\tilde a,\tilde a')$, and
\[
K(x,y)
=
\mathcal T(s,a,a',s')\,\pi(\tilde a\mid s')\,P(\tilde a'\mid s'),
\]
while for $\mathcal B^\pi_{\mathcal T}$, we have $x=(s,a)$,
$y=(s',\tilde a)$, and
\[
K(x,y)
=
P^{\mathcal T}_a(s'\mid s)\,\pi(\tilde a\mid s').
\]
In both cases, $K(x,\cdot)$ is a probability distribution, since
$\mathcal T(s,a,a',\cdot)$, $\pi(\cdot\mid s')$, $P(\cdot\mid s')$, and
$P^{\mathcal T}_a(\cdot\mid s)$ are. Since $K\geq0$, $B$ is monotone, and
since $\sum_yK(x,y)=1$, $B(q-c)=Bq-\gamma c$. Moreover,
\[
\begin{aligned}
|(Bq_1)(x)-(Bq_2)(x)|
&\leq
\gamma\sum_yK(x,y)\,|q_1(y)-q_2(y)|
\\
&\leq
\gamma\|q_1-q_2\|_\infty,
\end{aligned}
\]
so $B$ is a $\gamma$-contraction. The Banach fixed-point theorem gives
the remaining claims of (a).

We next prove (b). The $k=0$ term of Eq.~\eqref{eq:goal-occupancy} in
$\mathcal M_{\mathcal T}$ equals $(1-\gamma)\mathbbm{1}\{s=s^*\}$, since
$S^{(t)}=s$. For $k\geq1$, conditioning on the next state
$S^{(t+1)}=s'$ and the next action $A^{(t+1)}=\tilde a$, the Markov
property of $\mathcal M_{\mathcal T}$ gives
\[
\begin{aligned}
&P^{\mathcal T}_{a,\pi}\big(S^{(t+k)}=s^*\mid S^{(t)}=s\big)
\\
&\;=
\sum_{s'}P^{\mathcal T}_a(s'\mid s)
\sum_{\tilde a}\pi(\tilde a\mid s')
\\
&\qquad\times
P^{\mathcal T}_{\tilde a,\pi}\big(S^{(t+k)}=s^*\mid S^{(t+1)}=s'\big).
\end{aligned}
\]
Multiplying by $(1-\gamma)\gamma^k$, summing over $k\geq1$, and using
time-homogeneity to shift the time index by one, we obtain
\[
\begin{aligned}
d^\pi_{\mathcal T}(s^*\mid s,a)
&=
(1-\gamma)\mathbbm{1}\{s=s^*\}
\\
&\quad+
\gamma\sum_{s'}P^{\mathcal T}_a(s'\mid s)
\sum_{\tilde a}\pi(\tilde a\mid s')
\\
&\qquad\times
d^\pi_{\mathcal T}(s^*\mid s',\tilde a).
\end{aligned}
\]
Hence $d^\pi_{\mathcal T}$ is a fixed point of
$\mathcal B^\pi_{\mathcal T}$, and by (a) it is the unique one.

It remains to prove (c). Multiplying the fixed-point equation
$Q^\pi_{\mathcal T}=\mathcal K^\pi_{\mathcal T}Q^\pi_{\mathcal T}$ by
$P(a'\mid s)$ and summing over $a'$, the first term is unchanged, since
$\sum_{a'}P(a'\mid s)=1$. In the second term, Eq.~\eqref{eq:induced-transition}
gives
\[
\sum_{a'}\mathcal T(s,a,a',s')\,P(a'\mid s)
=
P^{\mathcal T}_a(s'\mid s),
\]
while the inner sum over $\tilde a'$ gives
\[
\sum_{\tilde a'}P(\tilde a'\mid s')\,
Q^\pi_{\mathcal T}(s',\tilde a,\tilde a')
=
Q^\pi_{\mathcal T}(s',\tilde a).
\]
Therefore
\[
\begin{aligned}
Q^\pi_{\mathcal T}(s,a)
&=
(1-\gamma)\mathbbm{1}\{s=s^*\}
\\
&\quad+
\gamma\sum_{s'}P^{\mathcal T}_a(s'\mid s)
\sum_{\tilde a}\pi(\tilde a\mid s')\,
Q^\pi_{\mathcal T}(s',\tilde a),
\end{aligned}
\]
that is, $Q^\pi_{\mathcal T}=\mathcal B^\pi_{\mathcal T}Q^\pi_{\mathcal T}$.
By (a) and (b),
$Q^\pi_{\mathcal T}(s,a)=d^\pi_{\mathcal T}(s^*\mid s,a)$. This
completes the proof.
\end{proof}

\begin{proof}[Proof of Proposition~\ref{prop:pessimistic-occupancy}]
The proof proceeds in three steps. We first note that the
counterfactual Bellman equation is well posed for every compatible
counterfactual transition. We then show that the true counterfactual
transition of $\mathcal M$ is compatible, and that the corresponding
solution is the true counterfactual occupancy~\eqref{eq:ctf-occupancy}.
Averaging over the demonstrator's action then recovers the true
occupancy, and minimizing over $\mathbb T$ can only decrease it.

For every $\mathcal T\in\mathbb T$, Eq.~\eqref{eq:ctf-bellman} is the
fixed-point equation
$q=\mathcal K^\pi_{\mathcal T}q$. By Lemma~\ref{lem:bellman}(a), it
has a unique bounded solution $Q^\pi_{\mathcal T}(s,a,a')$.

We next show that the true counterfactual transition of $\mathcal M$
is compatible. Let
\[
\mathcal T_{\mathcal M}(s,a,a',s')
=
P\big(S^{(t+1)}_{A^{(t)}=a}=s'\mid S^{(t)}=s,A^{(t)}=a'\big)
\]
denote the counterfactual transition~\eqref{eq:ctf-transition} of
$\mathcal M$. On the event $\{A^{(t)}=a\}$, the consistency axiom gives
$S^{(t+1)}_{A^{(t)}=a}=S^{(t+1)}$. Hence
\[
\begin{aligned}
\mathcal T_{\mathcal M}(s,a,a,s')
&=
P\big(S^{(t+1)}=s'\mid S^{(t)}=s,A^{(t)}=a\big)
\\
&=
P(s'\mid s,a),
\end{aligned}
\]
so $\mathcal T_{\mathcal M}\in\mathbb T$.

We now show that the true counterfactual
occupancy~\eqref{eq:ctf-occupancy} solves Eq.~\eqref{eq:ctf-bellman}
with $\mathcal T_{\mathcal M}$. Fix $(s,a,a')$ with $P(a'\mid s)>0$, and let
$E=\{S^{(t)}=s,A^{(t)}=a'\}$. The $k=0$ term of
Eq.~\eqref{eq:ctf-occupancy} equals $(1-\gamma)\mathbbm{1}\{s=s^*\}$,
since $S^{(t)}_{A^{(t)}=a,\pi}=S^{(t)}=s$. For $k\geq1$, conditioning on
the counterfactual next state gives
\[
\begin{aligned}
&P\big(S^{(t+k)}_{A^{(t)}=a,\pi}=s^*\mid E\big)
\\
&\;=
\sum_{s'}\mathcal T_{\mathcal M}(s,a,a',s')\,
P\big(S^{(t+k)}_{A^{(t)}=a,\pi}=s^*\mid E_{s'}\big),
\end{aligned}
\]
where $E_{s'}=E\cap\{S^{(t+1)}_{A^{(t)}=a}=s'\}$. Since
$A^{(t)}=f_A(S^{(t)},U^{(t)})$ and
$S^{(t+1)}_{A^{(t)}=a}=f_S(S^{(t)},a,U^{(t)})$, the event $E_{s'}$ is
determined by $(S^{(t)},U^{(t)})$. From time $t+1$ onward, the
counterfactual trajectory starts at $s'$, draws
$\tilde a\sim\pi(\cdot\mid s')$, and then evolves through $f_S$ and
$\pi$, driven by $U^{(t+1)},U^{(t+2)},\dots$, which are drawn
independently of $(S^{(t)},U^{(t)})$. Conditional on $E_{s'}$, it
therefore has the same law as a trajectory that starts at
$S^{(t+1)}=s'$, sets $A^{(t+1)}$ to $\tilde a\sim\pi(\cdot\mid s')$, and
follows $\pi$ thereafter. Since the structural mechanisms are invariant
over time, it follows that
\[
\begin{aligned}
&(1-\gamma)\sum_{k\geq1}\gamma^{k}\,
P\big(S^{(t+k)}_{A^{(t)}=a,\pi}=s^*\mid E_{s'}\big)
\\
&\;=
\gamma\sum_{\tilde a}\pi(\tilde a\mid s')\,
d^\pi(s^*\mid s',\tilde a).
\end{aligned}
\]
Applying the law of total probability to the demonstrator's action at
$s'$, as below Eq.~\eqref{eq:ctf-occupancy}, gives
\[
d^\pi(s^*\mid s',\tilde a)
=
\sum_{\tilde a'}P(\tilde a'\mid s')\,
Q^\pi(s',\tilde a,\tilde a').
\]
Combining the three displays, we obtain
\[
\begin{aligned}
Q^\pi(s,a,a')
&=
(1-\gamma)\mathbbm{1}\{s=s^*\}
\\
&\quad+
\gamma\sum_{s'}\mathcal T_{\mathcal M}(s,a,a',s')
\sum_{\tilde a,\tilde a'}
\pi(\tilde a\mid s')
\\
&\qquad\times
P(\tilde a'\mid s')\,
Q^\pi(s',\tilde a,\tilde a'),
\end{aligned}
\]
that is,
$Q^\pi=\mathcal K^\pi_{\mathcal T_{\mathcal M}}Q^\pi$. By uniqueness of this solution,
$Q^\pi_{\mathcal T_{\mathcal M}}(s,a,a')=Q^\pi(s,a,a')$.

Finally, averaging over $a'\sim P(\cdot\mid s)$ and applying the law of
total probability as below Eq.~\eqref{eq:ctf-occupancy} gives
\[
\sum_{a'}Q^\pi_{\mathcal T_{\mathcal M}}(s,a,a')\,P(a'\mid s)
=
Q^\pi(s,a).
\]
Since $\mathcal T_{\mathcal M}\in\mathbb T$, the infimum
defining $\underline Q^\pi(s,a)$ is at most its value at
$\mathcal T_{\mathcal M}$, and hence
\[
\underline Q^\pi(s,a)
\leq
Q^\pi(s,a)
\quad\text{for every }(s,a).
\]
This completes the proof.
\end{proof}

\begin{proof}[Proof of Proposition~\ref{prop:ctf-critic}]
The proof has two steps. We first show that the positive goals
generated by \textsc{CtfRollout} follow the goal-reaching occupancy of
$\pi$ in $\mathcal M_{\mathcal T}$; the claim then follows by computing
the Bayes-optimal critic of Eq.~\eqref{eq:caucrl-nce} and identifying
it through Lemma~\ref{lem:bellman}(c). Throughout, we consider
untruncated rollouts, i.e., $H=\infty$ (the effect of truncation is
discussed after the proposition), and optimize
Eq.~\eqref{eq:caucrl-nce} over all functions $f$. The critic is then
determined on pairs $(s,a)$ in the support of the distribution $\nu$ of
training pairs drawn from $\mathcal D^{+}$, and on goals with
$P^-(s^*)>0$.

Consider the iteration $h$ of \textsc{CtfRollout}. The demonstrator's
action
$a^{\prime(h)}$ is drawn from $\mu(\cdot\mid\tilde s^{(h)})$
independently of the past given $\tilde s^{(h)}$. Marginalizing it out
and using $\mu=P(\cdot\mid s)$, the next state satisfies
\[
\begin{aligned}
&P\big(\tilde s^{(h+1)}=s'\mid
\tilde s^{(t:h)},\tilde a^{(t:h)}\big)
\\
&\;=
\sum_{a'}
\mathcal T(\tilde s^{(h)},\tilde a^{(h)},a',s')\,
P(a'\mid\tilde s^{(h)})
\\
&\;=
P^{\mathcal T}_{\tilde a^{(h)}}(s'\mid\tilde s^{(h)}),
\end{aligned}
\]
by Eq.~\eqref{eq:induced-transition}, and the next action is drawn from
$\pi(\cdot\mid\tilde s^{(h+1)})$. Hence the rollout from $(s,a)$ has the
trajectory distribution $P^{\mathcal T}_{a,\pi}$. Since $K$ is drawn
independently with $P(K=k)=(1-\gamma)\gamma^k$, the positive goal
$\tilde s^+=\tilde S^{(t+K)}$ satisfies
\[
\begin{aligned}
P(\tilde s^+=s^*\mid s,a)
&=
(1-\gamma)\sum_{k=0}^{\infty}\gamma^k
\\
&\quad\times
P^{\mathcal T}_{a,\pi}\big(S^{(t+k)}=s^*\mid S^{(t)}=s\big)
\\
&=
d^\pi_{\mathcal T}(s^*\mid s,a).
\end{aligned}
\]

We now compute the critic that maximizes Eq.~\eqref{eq:caucrl-nce}. By
the previous display, positive triples $(s,a,\tilde s^+)$ are distributed as
$q^+(s,a,s^*)=\nu(s,a)\,d^\pi_{\mathcal T}(s^*\mid s,a)$. Since $s^-$
is drawn independently of $(s,a)$, negative triples $(s,a,s^-)$ are
distributed as $q^-(s,a,s^*)=\nu(s,a)\,P^-(s^*)$. The objective in
Eq.~\eqref{eq:caucrl-nce} therefore equals
\[
\sum_{s,a,s^*}
\Big[
q^+\log\sigma(f)
+
q^-\log\big(1-\sigma(f)\big)
\Big],
\]
which decouples across $(s,a,s^*)$. For each triple with $\nu(s,a)>0$
and $P^-(s^*)>0$, the map
$x\mapsto q^+\log\sigma(x)+q^-\log(1-\sigma(x))$ is strictly concave
and is maximized at $\sigma(x)=q^+/(q^++q^-)$, i.e., at
$\exp x=q^+/q^-$, with the convention $\exp x=0$ when $q^+=0$. The
factor $\nu(s,a)$ cancels, so the optimal critic satisfies
\[
\exp f^*(s,a,s^*)
=
\frac{d^\pi_{\mathcal T}(s^*\mid s,a)}{P^-(s^*)}.
\]
By Lemma~\ref{lem:bellman}(c),
$d^\pi_{\mathcal T}(s^*\mid s,a)=Q^\pi_{\mathcal T}(s,a)$, and therefore
\[
\exp f^*(s,a,s^*)
=
\frac{1}{P^-(s^*)}\cdot Q^\pi_{\mathcal T}(s,a).
\]
This proves the claim.
\end{proof}

\begin{proof}[Proof of Proposition~\ref{prop:policy-improvement}]
The proof has two steps and follows the approximate policy improvement
argument of \citet[Lemma~4.2]{NEURIPS2022_e7663e97}, with
$\mathcal M_{\mathcal T}$ in place of the true environment. We first
show that the greedy policy $\pi'$ is $2\delta$-greedy with respect to
$Q^\pi_{\mathcal T}$; the claim then follows by propagating this
inequality through the Bellman operator of $\pi'$.

By Lemma~\ref{lem:bellman}(c), $Q^\pi_{\mathcal T}$ and
$Q^{\pi'}_{\mathcal T}$ are the fixed points of
$\mathcal B^\pi_{\mathcal T}$ and $\mathcal B^{\pi'}_{\mathcal T}$,
respectively. Let
$\widehat Q(s,a)=P^-(s^*)\exp f(s,a,s^*)$, so that
$|\widehat Q-Q^\pi_{\mathcal T}|\leq\delta$ pointwise by assumption, and
let $V^\pi(s)=\sum_a\pi(a\mid s)\,Q^\pi_{\mathcal T}(s,a)$.

Since $P^-(s^*)>0$, the map $x\mapsto P^-(s^*)\exp x$ is increasing, so
$\pi'(s)\in\arg\max_af(s,a,s^*)$ also maximizes $\widehat Q(s,\cdot)$.
Hence, for every $s$,
\[
\begin{aligned}
Q^\pi_{\mathcal T}(s,\pi'(s))
&\geq
\widehat Q(s,\pi'(s))-\delta
\\
&\geq
\sum_a\pi(a\mid s)\,\widehat Q(s,a)-\delta
\\
&\geq
V^\pi(s)-2\delta,
\end{aligned}
\]
where the first and last inequalities use
$|\widehat Q-Q^\pi_{\mathcal T}|\leq\delta$, and the middle one uses
that a maximum dominates any average.

We next propagate this bound. Since $\pi'$ is deterministic,
\[
\begin{aligned}
(\mathcal B^{\pi'}_{\mathcal T}Q^\pi_{\mathcal T})(s,a)
&=
(1-\gamma)\mathbbm{1}\{s=s^*\}
\\
&\quad+
\gamma\sum_{s'}P^{\mathcal T}_a(s'\mid s)\,
Q^\pi_{\mathcal T}(s',\pi'(s')).
\end{aligned}
\]
Applying the previous inequality at every $s'$ and using that $Q^\pi_{\mathcal T}$ is the
fixed point of $\mathcal B^\pi_{\mathcal T}$, we obtain
\[
\mathcal B^{\pi'}_{\mathcal T}Q^\pi_{\mathcal T}
\geq
\mathcal B^{\pi}_{\mathcal T}Q^\pi_{\mathcal T}-2\gamma\delta
=
Q^\pi_{\mathcal T}-2\gamma\delta
\]
pointwise. We now show by induction that
\[
(\mathcal B^{\pi'}_{\mathcal T})^{n}Q^\pi_{\mathcal T}
\geq
Q^\pi_{\mathcal T}-2\gamma\delta\sum_{k=0}^{n-1}\gamma^k
\]
for every $n\geq1$. The case $n=1$ is the display above. If the bound
holds for $n$, then the monotonicity of $\mathcal B^{\pi'}_{\mathcal T}$
and the identity $\mathcal B^{\pi'}_{\mathcal T}(q-c)
=\mathcal B^{\pi'}_{\mathcal T}q-\gamma c$ from
Lemma~\ref{lem:bellman}(a) give
\[
\begin{aligned}
(\mathcal B^{\pi'}_{\mathcal T})^{n+1}Q^\pi_{\mathcal T}
&\geq
\mathcal B^{\pi'}_{\mathcal T}Q^\pi_{\mathcal T}
-2\gamma^2\delta\sum_{k=0}^{n-1}\gamma^k
\\
&\geq
Q^\pi_{\mathcal T}-2\gamma\delta\sum_{k=0}^{n}\gamma^k.
\end{aligned}
\]
Letting $n\to\infty$, Lemma~\ref{lem:bellman}(a) gives
$(\mathcal B^{\pi'}_{\mathcal T})^{n}Q^\pi_{\mathcal T}\to
Q^{\pi'}_{\mathcal T}$, and therefore
\[
Q^{\pi'}_{\mathcal T}(s,a)
\geq
Q^{\pi}_{\mathcal T}(s,a)-\frac{2\gamma\delta}{1-\gamma}
\quad\text{for all }(s,a).
\]
This completes the proof.
\end{proof}

\section{Implementation Details}
\label{app:implementation}

This section provides implementation details for CauCRL beyond the
algorithmic description in Sec.~3. The continuous-control experiments use
environment-specific parameterizations of the nominal behavioral policy and
counterfactual transition model. Network architectures and optimization
hyperparameters are reported in Appendix~\ref{app:hyperparameters}.

\subsection{Nominal Behavioral Policy}
\label{app:nominal-policy}

For both continuous-control environments, we parameterize the nominal
behavioral policy as a $K$-component censored Gaussian mixture density
network with $K=5$ and diagonal covariance,
\[
    \mu_\phi(a\mid c)
    =
    \sum_{k=1}^{K}
    w_k(c)\,
    \mathcal{N}^{\mathrm{cens}}_{\mathcal A}
    \bigl(a;\mu_k(c),\sigma_k(c)\bigr),
\]
where $\mathcal{N}^{\mathrm{cens}}_{\mathcal A}$ denotes the Gaussian
distribution censored to the action box $\mathcal A$, with continuous
density in the interior and point mass at clipped boundaries. The
conditioning context is $c=s$ for AntMaze and $c=(s,g)$ for PointMaze.
The PointMaze context contains only learner-visible information;
privileged hazard and demonstrator metadata are excluded.

For Windy Corridor, the action space is discrete, and we parameterize the
nominal behavioral policy as a categorical distribution with context
$c=s$,
\[
\begin{aligned}
    \mu_\phi(a\mid c)
    &=
    \operatorname{softmax}\bigl(f_\phi(c)\bigr)_a,
    \\
    a &\in \{\texttt{left}, \texttt{right}, \texttt{forward}, \texttt{wait}\},
\end{aligned}
\]
where $f_\phi$ is an MLP with two hidden layers of 128 ReLU units applied
to a one-hot encoding of the agent's $x$-coordinate, $y$-coordinate, and
heading. The \texttt{wait} action is the demonstrator's no-op, which it
uses to pause when the wind is unfavorable. It is kept in the support of
$\mu_\phi$ because it occurs in the demonstrations, even though the learned
policy selects only among the three movement actions.

In all three environments, the nominal-policy likelihood is
\begin{equation}
    \mathcal{L}_{\mu}(\phi)
    =
    -\mathbb{E}_{(c,a)\sim\mathcal{D}_{\mu}}
    \left[
        \log \mu_\phi(a\mid c)
    \right].
    \label{eq:nominal-policy-loss}
\end{equation}
The environments use different fitting procedures. In PointMaze,
$\mu_\phi$ is trained separately on the $4{,}800$ expert episodes,
excluding the uniform-coverage and blind-demonstrator subsets, and the
checkpoint is selected on a validation split before the model is frozen.
In AntMaze, $\mathcal{L}_{\mu}$ is optimized jointly with the
counterfactual-transition objective with coefficient $\lambda_\mu=1$, and
the final iterate is used. In Windy Corridor, $\mathcal{D}_{\mu}$ consists
of the $1{,}000$ demonstrations in $\mathcal{D}^{+}$, on which
$\mathcal{L}_{\mu}$ reduces to the four-way cross-entropy. We optimize it
separately with Adam (learning rate $10^{-3}$, batch size 256) for
$3{,}000$ updates and use the final iterate. Because the state space is
small ($15\times15$ cells and four headings), we evaluate the frozen
$\mu_\phi$ once on every reachable state and store the resulting table,
from which the counterfactual rollouts sample the nominal action.

For AntMaze, the learned mixture weights are also used by the rollout
construction in Appendix~\ref{app:rollout-implementation}. Network and
optimization hyperparameters are reported in
Appendix~\ref{app:hyperparameters}.

\subsection{Counterfactual Transition Model}
\label{app:ctf-transition-model}

The three environments use different parameterizations of the
counterfactual transition model $T_\eta(s,a,a',\cdot)$ introduced in
Sec.~3.1, where $a$ denotes the target action and $a'$ the demonstrator
action. Windy Corridor uses a finite categorical model with locally
restricted support, while PointMaze and AntMaze use neural transition
models with different spectral-normalization schemes.

\textbf{Windy Corridor.}
Because the state and action spaces are finite, the counterfactual
transition is represented explicitly over states and action pairs.
Actions other than \texttt{forward} have deterministic one-step dynamics
and are represented by point masses at their corresponding successors.
Factual forward transitions use the empirical one-step distribution
$\widehat P(\cdot\mid s,a)$.

For off-diagonal forward transitions, we use a categorical kernel
$\kappa_\eta$ to combine the empirical transition distributions associated
with the two actions:
\begin{equation}
\begin{aligned}
T_\eta(s,a,a',y)
&=
(1-\epsilon)
\sum_{x,x'}
\widehat P(x\mid s,a)\,
\widehat P(x'\mid s,a')
\\
&\quad\times
\kappa_\eta(y\mid x,x',s,a,a')
+
\frac{\epsilon}{|\mathcal S|},
\end{aligned}
\label{eq:windy-ctf-transition}
\end{equation}
with $\epsilon=10^{-3}$.

The kernel is categorical over $75$ local offsets around the deterministic
successor of the target action, corresponding to
$\Delta x,\Delta y\in\{-2,\ldots,2\}$ and heading changes in
$\{0,\pm1\}$. Its support is restricted to candidate successors within
radius $r=2$ of both factual anchors under the state metric used by the
model. The kernel is parameterized using translation-invariant local
features and does not receive the latent wind variable. It is trained by
cross-entropy using the counterfactual supervision described in
Sec.~3.1.

\textbf{PointMaze.}
PointMaze uses a stochastic generator that predicts the two-dimensional
planar displacement of the agent conditioned on the current state, target
action $a$, and demonstrator action $a'$. Let $\Delta x=x'-x$ denote the
observed planar displacement. For each input, the model produces $M=8$
samples $\Delta\tilde{x}_1,\ldots,\Delta\tilde{x}_M$ and is trained with
the Monte Carlo energy score
\begin{equation}
\begin{aligned}
    \widehat{\ell}_{\mathrm{ES}}
    &=
    \frac{1}{M}
    \sum_{m=1}^{M}
    \|\Delta\tilde{x}_m-\Delta x\|_2
    \\
    &\quad
    -
    \frac{1}{2M(M-1)}
    \sum_{m\neq n}
    \|\Delta\tilde{x}_m-\Delta\tilde{x}_n\|_2 .
\end{aligned}
\label{eq:pointmaze-energy-score}
\end{equation}
The sampled displacement is clipped before constructing the next stacked
state by shifting the position history and inserting the generated planar
position.

To control sensitivity to the target action, the linear maps on the
target-action pathway are projected to spectral norm at most $1$.
Together with the $1$-Lipschitz nonlinearities and clipping operation,
this gives a Lipschitz bound of $1$ with respect to the target action in
raw action units. An auxiliary classifier is trained jointly to predict the
hazard-transition indicator. Let $\mathcal{D}_{\mathrm{diag}}$ and
$\mathcal{D}_{\mathrm{off}}$ denote the diagonal and off-diagonal
subsets of a data partition $\mathcal{D}$. For each regime
$r\in\{\mathrm{diag},\mathrm{off}\}$, we define
\begin{equation}
    \mathcal{L}_{r}(\mathcal{D})
    =
    \frac{1}{|\mathcal{D}_{r}|}
    \sum_{i\in\mathcal{D}_{r}}
    \widehat{\ell}_{\mathrm{ES},i}
    +
    \frac{\lambda_{\mathrm{on}}}{|\mathcal{D}|}
    \sum_{i\in\mathcal{D}_{r}}
    \ell^{\mathrm{BCE}}_{\mathrm{onset},i},
\end{equation}
where $\lambda_{\mathrm{on}}=0.25$. The complete PointMaze
transition-model objective is
\begin{equation}
    \mathcal{L}_{\mathrm{ETT}}(\mathcal{D})
    =
    \mathcal{L}_{\mathrm{diag}}(\mathcal{D})
    +
    \mathcal{L}_{\mathrm{off}}(\mathcal{D}).
\end{equation}
The onset component is estimated using class-stratified sampling and
reweighted to the empirical event prevalence.

\textbf{AntMaze.}
AntMaze uses a deterministic predictor of the full state displacement,
conditioned on the current state, target action $a$, and demonstrator
action $a'$. Its predictive distribution is a Dirac measure, for which
the energy score reduces to
\begin{equation}
    \ell_{\mathrm{ES}}
    =
    \|g_\eta(s,a,a')-(s'-s)\|_2 .
    \label{eq:antmaze-energy-score}
\end{equation}
Spectral normalization is applied to every layer of the displacement
network using $10$ power iterations and coefficient $2.0$. This
coefficient controls the per-layer spectral scale and is used as a
regularizer; it is not interpreted as a global Lipschitz constant of the
transition network.

Additional architecture and optimization hyperparameters are reported in
Appendix~\ref{app:hyperparameters}.

\subsection{Counterfactual Rollout and Replay Construction}
\label{app:rollout-implementation}

Model-generated continuations are stored separately from the original
offline demonstrations, which remain fixed throughout training. Generated
trajectories preserve their temporal structure so that future-goal
relabeling is performed within the same recorded or generated trajectory.

\textbf{Windy Corridor.}
For Windy Corridor, each continuation starts from a recorded
state--action pair $(s,a)$ drawn uniformly from $\mathcal D^{+}$, and the
recorded action is kept as the first target action. At each generated
state, a nominal demonstrator action $a'$ is sampled from the tabulated
$\mu_\phi$, the next state is sampled from the tabulated
$T_\eta(s,a,a',\cdot)$, and the next target action is sampled from the
current learner. Because the target action always comes from the learner,
the factual branch is used exactly when the learner's action coincides
with the sampled demonstrator action, and no separate retention
probability is needed. A continuation ends at lava, at the goal, or at
the rollout horizon. The augmentation buffer is regenerated with the
current policy at the start of each training round. Contrastive positives
are relabeled from future states of the generated continuations, while
behavioral cloning continues to use the recorded actions in
$\mathcal D^{+}$.

\textbf{PointMaze.}
For PointMaze, we retain a recorded state--action pair and replace its
subsequent continuation with a rollout from the learned counterfactual
transition model. The generated suffix supplies future states for
contrastive goal relabeling, while behavioral-cloning supervision continues
to use the recorded actions from the offline demonstrations.

\textbf{AntMaze.}
For AntMaze, the recorded prefix is retained up to the route decision and
the remaining trajectory is replaced by a model-generated continuation.
At each generated state, a nominal demonstrator action is sampled from
$\mu_\phi$ and a target action is sampled from the current learner.
The probability of retaining the factual branch is
\[
    p_{\mathrm{diag}}(s)=\sum_{k=1}^{K} w_k(s)^2,
\]
computed from the mixture weights of the nominal behavioral policy;
otherwise the transition model is evaluated using the learner-selected
action.

We generate continuations of length $90$ and refresh the augmentation
buffer throughout actor--critic training using the current policy.
Each refresh contains $2{,}000$ generated continuations, and generated
and recorded trajectories each contribute half of the training minibatch.
Additional rollout parameters are listed in
Appendix~\ref{app:hyperparameters}.

\subsection{Contrastive Critic and Actor Training}
\label{app:crl-implementation}

CauCRL uses the same actor--critic parameterization as the corresponding
CRL baseline within each environment. PointMaze represents goals using the
full stacked state, whereas AntMaze uses the planar $(x,y)$ position.

The implementation uses in-batch negative sampling for contrastive critic
training. PointMaze uses a single critic, while AntMaze uses two
independently parameterized critics and evaluates actor actions using the
minimum of their scores.

The actor is a tanh-Gaussian goal-conditioned policy. Behavioral-cloning
regularization uses recorded offline actions; PointMaze draws these actions
from a separately balanced BC minibatch, whereas AntMaze uses the standard
training minibatch. Network dimensions and optimization hyperparameters are
reported in Appendix~\ref{app:hyperparameters}.

\section{Experimental Setup}
\label{app:experimental-setup}

\subsection{Windy Corridor}
\label{app:windy-setup}

\textbf{Geometry.}
Windy Corridor is a $15\times15$ MiniGrid~\citep{chevalier2023minigrid} environment with two routes from
the start $(1,1)$ to the goal $(13,1)$. The short route runs east along
row $y=1$, directly above a row of lava cells at $x\in\{3,4,5\}$ and
$x\in\{8,9,10\}$. The safe route descends along column $x=1$, follows row
$y=13$, and returns north along column $x=13$; it takes about $39$ steps,
compared with about $13$ on the short route. Entering a lava cell
terminates the episode as a failure.

\textbf{Dynamics and wind.}
The agent's actions are \texttt{left}, \texttt{right}, and
\texttt{forward}; the demonstrator can additionally \texttt{wait}. A wind
direction is drawn independently at every step, east, south, west, or
north with probability $0.1$ each and still with probability $0.6$. It
is observed by the demonstrator before acting and resampled after the
transition. Wind affects only \texttt{forward}: a tailwind moves the agent
two cells, a headwind cancels the move, and a crosswind moves it one cell
forward and then one cell in the wind direction. On the short route, a
south wind therefore pushes the agent into the lava from six positions
along row $y=1$; the safe route has no such positions.

\textbf{Observation.}
The learner observes its cell and heading, $s_t=(x_t,y_t,h_t)$, and the
goal is the target cell $(x,y)$. The wind is never included in the
learner observation.

\textbf{Demonstrator and dataset.}
At the beginning of each episode, the demonstrator selects the short route
with probability $0.95$ and the safe route otherwise. On the short route,
it observes the current wind and waits whenever it stands at one of the
six exposed positions under a south wind; on the safe route, it follows
the corridor without consulting the wind. The demonstration dataset
$\mathcal D^{+}$ contains $1{,}000$ episodes and $14{,}028$ transitions,
of which $952$ follow the short route and $48$ the safe route. Every
episode reaches the goal, so the dataset contains no lava failures.
All offline RL methods use $\mathcal D^{+}$.

A separate pool of $1{,}000$ trajectories of a uniformly random policy is
used only to construct $P^{-}$ for the counterfactual transition model
(Appendix~\ref{app:ctf-transition-model}) and is excluded from
actor--critic replay.

\textbf{Evaluation.}
Each evaluation checkpoint uses $300$ episodes with a fixed set of
environment seeds, deterministic policy actions, and a horizon of $400$
steps under the natural wind distribution. Success is recorded when the
agent reaches the goal cell. We additionally report the robust-route
rate, the fraction of episodes that enter the safe-route row $y=13$.
\subsection{PointMaze}
\label{app:pointmaze-setup}

\textbf{Geometry.}
PointMaze is a $9\times5$ grid maze with two start-to-goal routes. The
agent starts at $(0.5,3.5)$ and the goal is at $(8.5,3.5)$. The short
route runs along row $y=3$ and crosses three hazard cells at
$(3,3)$, $(4,3)$, and $(5,3)$. The safe route branches downward at
$x=1$, follows row $y=1$, and rejoins the short route at $x=7$.
Episodes have a horizon of $50$ steps.

\textbf{Dynamics and hazard.}
Actions are two-dimensional velocities clipped to $[-1,1]^2$, with
additive Gaussian noise of standard deviation $0.01$. Each environment
step is integrated over $10$ simulation substeps with per-axis wall
collision.

Each of the three hazard cells is independently active with probability
$0.30$ at every step. The hazard indicators are observed by the
demonstrator before acting and are resampled after the transition. If the
agent ends a step inside an active hazard cell, it enters an absorbing
failure state: its position remains fixed, subsequent actions have no
effect, and the goal reward can no longer be obtained.

\textbf{Observation.}
The learner observes the four most recent planar positions,
\[
    s_t =
    [x_t,y_t,\ldots,x_{t-3},y_{t-3}],
\]
giving an $8$-dimensional state representation. The goal is represented
in the corresponding stacked form. Hazard indicators are never included
in the learner observation.

\textbf{Demonstrator and dataset.}
At the beginning of each episode, the demonstrator selects either the
safe route or the short route. Safe-route demonstrations follow a
shortest path computed with the hazard cells treated as blocked. On the
short route, the demonstrator observes the current hazard indicators and,
before entering a hazard cell, avoids actions predicted to terminate in
an active cell.

The offline dataset contains $6{,}600$ episodes and $330{,}000$
transitions:
\begin{center}
\small
\begin{tabular}{lr}
\toprule
Behavior & Episodes \\
\midrule
Wait at the hazard, then cross & $3{,}073$ \\
Cross immediately              & $1{,}470$ \\
Uniform coverage               & $1{,}200$ \\
Blind demonstrator             & $600$ \\
Forced safe detour             & $257$ \\
\bottomrule
\end{tabular}
\end{center}
The resulting dataset contains $4{,}514$ successful episodes,
$1{,}451$ absorbed failures, and $635$ timeouts. All offline RL methods
use the complete dataset. The nominal behavioral policy described in
Appendix~\ref{app:nominal-policy} is fitted only on the $4{,}800$
demonstrator episodes, excluding the uniform-coverage and
blind-demonstrator subsets.

\textbf{Evaluation.}
Each evaluation checkpoint uses $300$ episodes with a fixed set of
environment seeds and deterministic policy actions. Success is recorded
when the agent reaches the task goal according to the environment reward.
We additionally report the safe-detour rate. An episode is classified as
taking the detour if it enters the lower route before entering any hazard
cell; otherwise, entering a hazard cell first is classified as taking the
shortcut.

\subsection{AntMaze}
\label{app:antmaze-setup}

\textbf{Geometry.}
AntMaze is a continuous-control ring maze with two routes from the start
to the goal. The Ant starts at $(0,0)$ and the goal is at $(24,0)$.
The direct route crosses two stochastic rockfall zones, while a longer
perimeter route avoids both. Episodes have a horizon of $800$ steps.

The two rockfall zones lie on the direct route. Zone~1 spans
$x\in[6.6,9.4]$ and Zone~2 spans $x\in[14.6,17.4]$, with both covering
$y\in[-2,2]$. The learner observes a $29$-dimensional Ant state and
selects $8$-dimensional joint torques; the goal is represented by its
planar $(x,y)$ position. Rockfall states and activation schedules are
hidden from the learner.

\textbf{Hazard.}
Each rockfall zone is independently activated with probability
$p_{\mathrm{active}}=0.40$ at the beginning of an episode. Conditional on
activation, the start times are sampled independently as
\[
    t_0^{(1)}\sim\mathcal{U}\{10,\ldots,45\},
    \qquad
    t_0^{(2)}\sim\mathcal{U}\{90,\ldots,125\}.
\]
An activated zone remains active for $72$ steps and drops four rocks.
Physical contact between a rock and the Ant produces an absorbing failure:
the agent loses control for the remainder of the episode and can no longer
receive the goal reward.

\textbf{Demonstrator and dataset.}
The demonstrator first chooses between the direct and detour routes using
an episode-level route variable independent of the rockfall activations.
Detour trajectories follow the safe perimeter route without consulting the
rockfall schedules. On direct-route trajectories, the demonstrator observes
the activation schedule of each zone and decides whether to enter or wait.
A crossing is attempted only when the predicted crossing interval, using a
$50$-step crossing estimate and an $8$-step safety margin, does not overlap
the corresponding rockfall interval.

The demonstration dataset contains $1{,}000$ episodes and $311{,}005$
transitions, collected with $p_{\mathrm{active}}=0.40$. Direct and detour
routes each account for $50\%$ of the demonstrations. The dataset contains
$994$ successful episodes and $6$ timeouts, with no rock-contact failures.

A separate pool of $300$ episodes is used only for fitting the
counterfactual transition model and is excluded from actor--critic replay.
This pool is collected in the same environment by overriding a
demonstrator decision to wait at Zone~1 with a go decision and retaining
trajectories in which the override produces a realized Zone~1 rock-contact
failure.

\textbf{Evaluation.}
Each evaluation checkpoint uses $300$ episodes with fixed environment
seeds and deterministic policy actions over the full $800$-step horizon.
Evaluation uses the same rockfall activation probability
$p_{\mathrm{active}}=0.40$ as data collection. Success is recorded when
the Ant reaches the task goal at $(24,0)$ according to the environment
reward.

\subsection{Network Architectures and Training Hyperparameters}
\label{app:hyperparameters}

Table~\ref{tab:agent-hparams} summarizes the actor--critic configurations
used for all three environments. Within each environment, CRL and CauCRL use
the same actor and critic architectures. All experiments were run on
NVIDIA GeForce RTX 5090 GPUs.

\begin{table}[t]
\centering
\small
\caption{Actor--critic architectures and training hyperparameters.}
\label{tab:agent-hparams}
\setlength{\tabcolsep}{4pt}

\resizebox{\columnwidth}{!}{%
\begin{tabular}{lccc}
\toprule
 & \textbf{Windy Corridor} & \textbf{PointMaze} & \textbf{AntMaze} \\
\midrule
State dimension & 3 & 8 & 29 \\
Action dimension & 3 (discrete) & 2 & 8 \\
Goal dimension & 2 & 8 & 2 \\
Episode horizon & 400 & 50 & 800 \\
Batch size & 64 & 256 & 1024 \\
Actor hidden layers & $[128]$ & $[256,256]$ & $[1024,1024]$ \\
Critic hidden layers & $[128]$ & $[256,256]$ & $[1024,1024]$ \\
Activation & ReLU & ReLU & ReLU \\
Representation dimension & 64 & 64 & 16 \\
Number of critics & 1 & 1 & 2 \\
Discount $\gamma$ & 0.99 & 0.95 & 0.999 \\
BC coefficient $\lambda_{\mathrm{BC}}$ & 1 & 0.05 & 0.05 \\
Entropy coefficient & 0 & 0 & 0 \\
Actor learning rate & $2\times10^{-3}$ & $3\times10^{-4}$ & $3\times10^{-4}$ \\
Critic learning rate & $2\times10^{-3}$ & $3\times10^{-4}$ & $3\times10^{-4}$ \\
Optimizer & SGD & Adam & Adam \\
Main updates & 100k & 300k & 100k \\
Critic warm-up & -- & 30k & -- \\
Evaluation interval & 250 & 1k & 400 \\
Evaluation episodes & 300 & 300 & 300 \\
\bottomrule
\end{tabular}%
}
\end{table}

For Windy Corridor, the critic encodes the state--action pair and the goal
with one-hot features over grid states and cells, and the contrastive score
uses temperature $0.07$. The actor receives the raw $(x,y,h)$ state. Its
objective $(1-\lambda)\,\mathbb{E}_{\pi}[f]+\lambda\log\pi(a\mid s,g)$
with $\lambda=0.5$ corresponds to $\lambda_{\mathrm{BC}}=1$. Actor
minibatches are drawn uniformly over distinct recorded state--action pairs
rather than over transitions, so the rare safe-route decisions are not
outweighed by the frequent short-route ones.

Table~\ref{tab:nominal-hparams} reports the nominal behavioral-policy
configurations. As described in Appendix~\ref{app:nominal-policy}, the
PointMaze and Windy Corridor models are trained separately and then frozen,
whereas the AntMaze model is optimized jointly with the counterfactual
transition model.

\begin{table}[t]
\centering
\small
\caption{Nominal behavioral-policy architectures and training settings.}
\label{tab:nominal-hparams}

\setlength{\tabcolsep}{3pt}
\begin{tabularx}{\columnwidth}{
    @{}
    >{\raggedright\arraybackslash}p{0.27\columnwidth}
    >{\centering\arraybackslash}X
    >{\centering\arraybackslash}X
    >{\centering\arraybackslash}X
    @{}
}
\toprule
 & \textbf{Windy Corridor} & \textbf{PointMaze} & \textbf{AntMaze} \\
\midrule
Conditioning context
    & state (one-hot, 34-D)
    & state + goal (16-D)
    & state (29-D) \\

Action dimension
    & 4 (discrete)
    & 2
    & 8 \\

Mixture components $K$
    & -- (categorical)
    & 5
    & 5 \\

Hidden layers
    & $[128,128]$
    & $[128,128]$
    & $[512,512]$ \\

Activation
    & ReLU
    & SiLU
    & ReLU \\

Covariance
    & --
    & diagonal
    & diagonal \\

Action likelihood
    & categorical
    & censored on $[-1,1]^2$
    & censored on empirical action box \\

Scale parameterization
    & --
    & $\sigma\in[0.02,2.0]$, $|\mu|\leq3$
    & $\log\sigma\in[-5,2]$ \\

Spectral normalization
    & none
    & none
    & all layers, coef.\ $2.0$, 10 power iterations \\

Batch size
    & 256
    & 512
    & 1024 \\

Learning rate
    & $10^{-3}$
    & $3\times10^{-4}$
    & $10^{-3}$ \\

Gradient clipping
    & --
    & 10.0
    & -- \\

Training updates
    & 3k
    & 20k
    & 20k \\

Optimization
    & separate, then frozen
    & separate, then frozen
    & joint with transition model,
      $\lambda_\mu=1$ \\

Checkpoint selection
    & final iterate
    & best validation NLL
    & final iterate \\

Training data
    & 900 expert episodes
    & 4,800 expert episodes
    & full demonstration pool \\

Validation split
    & --
    & 10\% by episode
    & episode-level holdout shared with the counterfactual transition model \\
\bottomrule
\end{tabularx}
\end{table}

For PointMaze, the $4{,}800$ nominal-policy episodes are split by complete
episode into $4{,}321$ training and $479$ validation episodes. Validation
NLL is evaluated every $500$ updates. Context normalization statistics are
fit on the training split only. For AntMaze, the nominal-policy head uses
the same positive-sample normalization statistics and episode-level holdout
as the transition model. For Windy Corridor, the nominal policy is trained
on a random $90\%$ episode split of $\mathcal D^{+}$ and is tabulated over
all grid states after training.

Table~\ref{tab:transition-hparams} summarizes the learned
counterfactual-transition models.

\begin{table}[t]
\centering
\small
\caption{Counterfactual transition-model architectures and training settings.}
\label{tab:transition-hparams}

\setlength{\tabcolsep}{2pt}
\begin{tabularx}{\columnwidth}{
    @{}
    >{\raggedright\arraybackslash}p{0.27\columnwidth}
    >{\centering\arraybackslash}X
    >{\centering\arraybackslash}X
    >{\centering\arraybackslash}X
    @{}
}
\toprule
 & \textbf{Windy Corridor} & \textbf{PointMaze} & \textbf{AntMaze} \\
\midrule
Prediction target
    & next state (75 local offsets)
    & planar displacement (2-D)
    & full-state displacement (29-D) \\

Hidden layers
    & $[128,128]$
    & $[128,128]$ + generator
    & $[512,512]$ \\

Activation
    & ReLU
    & ReLU
    & ReLU \\

Lipschitz control
    & locality restriction, $r=2$
    & target-action spectral projection,
      $L_a\leq1$
    & all-layer spectral normalization,
      coef.\ $2.0$ \\

Power iterations
    & --
    & --
    & 10 \\

Predictive form
    & categorical (tabulated)
    & stochastic, $M=8$
    & deterministic (Dirac) \\

Batch size
    & 256
    & 256
    & 1024 \\

Training updates
    & 6k
    & 2.5k
    & 20k \\

Learning rate
    & $10^{-3}$
    & $3\times10^{-4}$
    & $10^{-3}$ \\

Optimizer
    & Adam
    & Adam
    & Adam \\

Checkpoint selection
    & final iterate
    & earliest validation minimum
    & final iterate \\
\bottomrule
\end{tabularx}
\end{table}

For PointMaze, checkpoint selection uses the earliest minimum of
\[
    \mathcal{L}_{\mathrm{ETT}}(\mathcal{V})
    =
    \mathcal{L}_{\mathrm{diag}}(\mathcal{V})
    +
    \mathcal{L}_{\mathrm{off}}(\mathcal{V}),
\]
evaluated on the held-out validation partition $\mathcal{V}$.
AntMaze and Windy Corridor use the final transition-model iterate without
validation-based checkpoint selection. For Windy Corridor, the factual
next-state network and the counterfactual kernel $\kappa_\eta$ share the
settings in Table~\ref{tab:transition-hparams}, and the background weight
is $\epsilon=10^{-3}$.

The AntMaze counterfactual-replay configuration is summarized in
Table~\ref{tab:rollout-hparams}.

\begin{table}[t]
\centering
\small
\caption{AntMaze counterfactual rollout and replay hyperparameters.}
\label{tab:rollout-hparams}

\setlength{\tabcolsep}{3pt}
\begin{tabularx}{\columnwidth}{
    @{}
    >{\raggedright\arraybackslash}p{0.43\columnwidth}
    >{\raggedright\arraybackslash}X
    @{}
}
\toprule
\textbf{Hyperparameter} & \textbf{Value} \\
\midrule
Rollout start
    & route-dependent split point \\

Rollout horizon
    & 90 \\

Rollouts per refresh
    & 2,000 \\

Refresh interval
    & every training round \\

Generated-data fraction
    & 0.5 \\

State feasibility
    & state bounds and maze free space \\
\bottomrule
\end{tabularx}
\end{table}

The Windy Corridor configuration is summarized in
Table~\ref{tab:windy-rollout-hparams}.

\begin{table}[t]
\centering
\small
\caption{Windy Corridor counterfactual rollout and replay hyperparameters.}
\label{tab:windy-rollout-hparams}

\setlength{\tabcolsep}{3pt}
\begin{tabularx}{\columnwidth}{
    @{}
    >{\raggedright\arraybackslash}p{0.43\columnwidth}
    >{\raggedright\arraybackslash}X
    @{}
}
\toprule
\textbf{Hyperparameter} & \textbf{Value} \\
\midrule
Rollout start
    & recorded $(s,a)$, uniform over $\mathcal D^{+}$ \\

Rollout horizon
    & 400 \\

Rollouts per refresh
    & 10,000 \\

Refresh interval
    & every training round (5k updates) \\

Training rounds
    & 20 \\

Generated-data fraction
    & 1.0 (critic); actor BC uses $\mathcal D^{+}$ \\

State feasibility
    & tabulated over legal grid states \\
\bottomrule
\end{tabularx}
\end{table}

\textbf{Baseline configurations.}
For GCIQL, both continuous-control environments use expectile $0.9$ and
Polyak coefficient $0.005$. The PointMaze actor uses AWR with inverse
temperature $3.0$ and weights clipped at $100$, whereas AntMaze uses the
DDPG+BC-style actor with coefficient $0.3$.

For HIQL, both continuous-control environments use three hidden layers of
size $512$, layer normalization, representation dimension $10$, expectile
$0.7$, low- and high-level actor coefficients of $3.0$, Polyak coefficient
$0.005$, learning rate $3\times10^{-4}$, and batch size $1024$. PointMaze
uses discount $0.95$ and a $10$-step subgoal horizon, whereas AntMaze uses
discount $0.99$ and a $25$-step subgoal horizon. HIQL is trained for
$330{,}000$ updates with checkpoints every $1{,}000$ updates.

The PointMaze subgoal horizon differs from the OGBench~\citep{park2025ogbench} default of $25$.
We evaluated horizons in $\{5,10,25\}$ together with actor coefficients
in $\{3,10\}$; the $5$-step setting produced nearly degenerate subgoal
advantages because most sampled subgoals already lay within the success
radius, and we retained a horizon of $10$.

For Windy Corridor, all baselines are trained on $\mathcal D^{+}$ for
$100{,}000$ updates and evaluated every $250$ updates with the protocol of
Appendix~\ref{app:windy-setup}. Their minibatches are drawn uniformly over
distinct recorded state--action pairs, as for CRL. GCBC uses the CRL actor
trained only with the behavioral-cloning loss on future-state relabeled
goals. GCIQL uses expectile $0.9$, Polyak coefficient $0.005$, and the AWR
actor with inverse temperature $3.0$ and weights clipped at $100$. Its
$Q$ and $V$ networks use one hidden layer of $128$ units on the critic's
one-hot features and are trained with Adam at learning rate
$3\times10^{-4}$. HIQL uses the continuous-control configuration above
with a categorical low-level actor, discount $0.99$, and a $3$-step
subgoal horizon, since Windy Corridor demonstrations are only $10$ to
$40$ steps long.

For both GCIQL and HIQL, goal relabeling uses a binary $1/0$ reward for
the first non-failure arrival at the relabeled goal rather than the
$0/-1$ convention used by the reference OGBench implementations.
Absorbing failure states use a zero bootstrap mask, while time-limit
truncations continue to bootstrap. Other baseline settings follow the
corresponding reference implementations.

\subsection{Result Details}
\label{sec:result-details}

Table~\ref{tab:per-seed} reports the complete per-seed results for all three benchmarks: for each method and
random seed, the success rate averaged over the final $10\%$ of evaluation checkpoints. These are the per-seed
values behind the means reported in Section~\ref{sec:experiments} and plotted in
Figures~\ref{fig:minigrid-result} and~\ref{fig:results}.

\begin{table}[t]
\centering
\footnotesize
\setlength{\tabcolsep}{2.8pt}
\caption{Per-seed final success rates (mean over the final $10\%$ of evaluation checkpoints) on all three benchmarks.}
\label{tab:per-seed}

\resizebox{\columnwidth}{!}{%
\begin{tabular}{@{}llccccc@{}}
\toprule
Benchmark & Method & \multicolumn{5}{c}{Per-seed} \\
\midrule
\multirow{5}{*}{Windy}
 & CauCRL         & 0.797 & 0.984 & 0.919 & 0.895 & 1.000 \\
 & CRL            & 0.513 & 0.513 & 0.513 & 0.513 & 0.513 \\
 & GCIQL          & 0.513 & 0.513 & 0.513 & 0.513 & 0.513 \\
 & GCBC           & 0.586 & 0.562 & 0.635 & 0.574 & 0.574 \\
 & HIQL           & 0.513 & 0.513 & 0.513 & 0.513 & 0.513 \\
\midrule
\multirow{6}{*}{PointMaze}
 & CauCRL         & 1.000 & 1.000 & 1.000 & 1.000 & 1.000 \\
 & CRL            & 0.347 & 0.347 & 0.347 & 0.347 & 0.347 \\
 & GCIQL          & 0.347 & 0.347 & 0.347 & 0.347 & 0.347 \\
 & GCBC           & 0.347 & 0.347 & 0.347 & 0.347 & 0.347 \\
 & HIQL           & 0.194 & 0.173 & 0.166 & 0.213 & 0.135 \\
\midrule
\multirow{5}{*}{AntMaze}
 & CauCRL         & 0.712 & 0.869 & 0.798 & 0.820 & 0.879 \\
 & CRL            & 0.596 & 0.620 & 0.646 & 0.677 & 0.605 \\
 & GCIQL          & 0.532 & 0.612 & 0.516 & 0.913 & 0.508 \\
 & GCBC           & 0.476 & 0.499 & 0.429 & 0.529 & 0.559 \\
 & HIQL           & 0.401 & 0.390 & 0.421 & 0.417 & 0.418 \\
\bottomrule
\end{tabular}%
}
\end{table}

\section{Related Work}
\label{app:related-work}

\textbf{Contrastive goal-conditioned reinforcement learning.}
Contrastive reinforcement learning (CRL) casts goal-conditioned control as learning a critic that distinguishes future achieved states from unrelated states, thereby connecting contrastive representation learning with goal-reaching value functions~\citep{NEURIPS2022_e7663e97}.
Subsequent work has extended this framework along several directions.
Stable CRL studies architectural and optimization choices that make contrastive goal-conditioned policies effective on offline, image-based robotic manipulation tasks~\citep{zheng2024stabilizing}.
TD InfoNCE replaces Monte-Carlo future-state estimation with a temporal-difference contrastive objective, enabling trajectory stitching and more sample-efficient long-horizon prediction~\citep{zheng2024contrastive}.
More recently, JaxGCRL systematically studies modern CRL design choices and finds strong performance with symmetric InfoNCE objectives and distance-based critics~\citep{bortkiewicz2025accelerating}, while quasimetric representations combine contrastive successor representations with temporal-distance structure for offline goal-conditioned learning~\citep{myers2025offline}.

Our approach is complementary to these developments: CauCRL modifies the transition and future-outcome supervision used to train a goal-conditioned critic, rather than relying on a particular contrastive objective.
For example, InfoNCE-based CRL can use counterfactually augmented future states as positive pairs, while a TD-based contrastive method can apply the same robust transition construction to the one-step successors used in its bootstrapped targets.

\end{document}